\documentclass{article}

\usepackage[preprint]{neurips_2026}

\usepackage[utf8]{inputenc} 
\usepackage[T1]{fontenc}    
\usepackage{hyperref}       
\usepackage{url}            
\usepackage{booktabs}       
\usepackage{amsfonts}       
\usepackage{nicefrac}       
\usepackage{microtype}      
\usepackage{xcolor}         
\usepackage{amsmath,amssymb}
\usepackage{algorithm}
\usepackage{algpseudocode}
\usepackage{graphicx}
\usepackage{ulem} 
\usepackage{placeins}
\usepackage{subcaption}
\usepackage{amsthm} 
\newtheorem{thm}{Theorem} 

\newtheorem{lem*}{Lemma}

\title{TBP-mHC: full expressivity for manifold-constrained hyper connections through transportation polytopes}

\author{%
  Anton Lyubinin,
  \texttt{lastname@protonmail.com} \\
}

\begin{document}

\maketitle

\begin{abstract}
Hyper-Connections (HC) improve residual networks by introducing learnable mixing across multiple residual streams, but unconstrained mixing leads to training instability. Manifold-Constrained Hyper-Connections (mHC) address this by enforcing approximate double stochasticity via Sinkhorn normalization, while mHC-lite ensures exact constraints through convex combinations of permutation matrices at the cost of factorial complexity. KromHC reduces this cost using Kronecker-product parameterizations, but restricts the mixing matrices to a structured submanifold of the Birkhoff polytope .

We propose Transportation Birkhoff Polytope (TBP) parameterizations and their Recursive variants (RTBP), which construct exactly doubly stochastic mixing matrices with $(n-1)^2$ degrees of freedom. Our approach avoids iterative normalization and combinatorial explosion while preserving full expressivity of the Birkhoff polytope. Empirical results on language model pre-training' demonstrate competitive performance with improved stability and scalability.

The code is publicly available at \url{https://github.com/alyubinin/tbp-mHC}
\end{abstract}

\section{Introduction}

Residual connections \citep{he2016deep} are a fundamental component
of deep neural networks, defined by the update

\[
x_{l+1}=x_{l}+F_{l}(x_{l},W_{l}),
\]
which preserves an identity mapping across layers and stabilizes gradient
propagation. Recent work \citep{HC} generalizes this paradigm
through Hyper-Connections (HC), which expand a single residual stream
into $n$ parallel streams and introduce learnable mixing matrices.
A single HC layer is given by

\[
X_{l+1}=H_{l}^{\mathrm{res}}X_{l}+(H_{l}^{\mathrm{post}})^{\top}F_{l}\!\left(H_{l}^{\mathrm{pre}}X_{l},W_{l}\right),
\]
where $X_{l}\in\mathbb{R}^{n\times C}$ represents the expanded residue
stream, $H_{l}^{\mathrm{res}}\in\mathbb{R}^{n\times n}$ is a mixing
matrix, and $H_{l}^{\mathrm{pre}},H_{l}^{\mathrm{post}}\in\mathbb{R}^{1\times n}$
aggregate features from $n$ streams into one and map the layer output
back onto the $n$ streams respectively. This formulation enables
richer feature propagation across residual streams and improves expressivity.

However, replacing the identity with a general linear operator removes
the structural guarantees that stabilize deep compositions. In particular,
repeated products
\[
\prod_{i=1}^{L}H_{l+i}^{\mathrm{res}},
\]
that arise when multiple layers are composed, may deviate from identity-like
behavior, leading to loss of feature mean preservation and potential
gradient instability.

To address this problem, \citep{mHC} recent work constrains $H_{l}^{\mathrm{res}}$
to lie in the Birkhoff polytope \citep{birkhoff1946},

\[
\mathcal{B}_{n}\;=\;\left\{ X\in\mathbb{R}_{\ge0}^{n\times n}\;\middle|\;X\mathbf{1}=\mathbf{1},\;X^{\top}\mathbf{1}=\mathbf{1}\right\} ,
\]
i.e., the set of doubly stochastic (DS) matrices. Such matrices preserve
feature averages and have spectral norm bounded by 1, ensuring stability
under composition.

The Manifold-Constrained Hyper-Connections (mHC) framework from \citep{mHC}
enforces this constraint via the Sinkhorn--Knopp (SK) algorithm \citep{sinkhorn1967concerning},
which iteratively normalizes rows and columns:

\[
H_{l}^{res}=\mathrm{SK}({\rm exp}({\rm mat}(\alpha_{l}^{res}x_{l}'W_{l}^{res}+b_{l}^{res}))).
\]
While this procedure converges to a doubly stochastic matrix under
mild conditions, practical implementations use a finite number of
iterations (e.g., $K\approx20$ ), yielding only an approximation.
Classical results show that convergence can be arbitrarily slow, and
finite-iteration outputs may significantly violate the DS constraints,
leading to accumulated errors across depth (see discussion in \citep{mhc-lite}).
Moreover, efficient deployment requires specialized fused kernels
and recomputation strategies, increasing engineering complexity.

To eliminate the approximation gap, mHC-lite \citep{mhc-lite}
leverages the Birkhoff-von Neumann theorem \citep{birkhoff1946,Neumann53},
which states that matrix from $\mathcal{B}_{n}$ can be written as
a convex combination of permutation matrices, and thus:

\[
H_{l}^{res}=\sum_{k=1}^{K}\alpha_{k}P_{k},\qquad\alpha=\mathrm{Softmax}(\theta),\quad P_{k}\in\mathcal{P}_{n}.
\]
This parameterization guarantees exact doubly stochasticity and enables
straightforward implementation. However, it suffers from factorial
parameter growth, as representing the full polytope may require up
to $n!$ permutation matrices, making it quickly impractical with
the growth of $n$.

KromHC \citep{KromHC} addresses this scalability issue by exploiting
the closure of the Birkhoff polytope under Kronecker products:

\[
H^{\mathrm{res}}=U^{(K)}\otimes\cdots\otimes U^{(1)},\qquad U^{(i)}\in\mathcal{B}_{n_{i}}.
\]
This construction guarantees exact doubly stochasticity while reducing
parameter complexity to $O(n^{2}C)$ . However, it restricts the mixing
matrices to structured submanifold of $\mathcal{B}_{n}$ , thus limiting
expressivity (for detailed spectral analysis see \citep{gomhc2026}).
While the results of \citep{KromHC} show that in the tests, conducted
in the paper, KromHC is competitive, one cannot fully rule out the
possibility of such limitation affecting performance in certain scenarios.
Another issue is related to the restriction to the structured subset
in general. When the optimal transport lies outside this subset, optimization
proceeds with a nonzero normal gradient component, which can manifest
as late-stage gradient amplification or instability. In our limited
trials we have observed two such cases of KromHC behavior (see appendix
\ref{subsec:Gradient-unstable-KromHC-runs}). Our implementation differs
from the \citep{KromHC} optimizer split, which may affect smoothness,
but the use of Muon optimizer \citep{liu2025muon} would not by
itself remove the constrained-manifold issue. Also in \citep{KromHC}
Muon is applied only to the main branch, whereas the hyper-connection
parameters are still optimized with AdamW \citep{loshchilov2018decoupled},
used in our work.

These approaches expose a fundamental trade-off between Exactness,
Expressivity, Memory efficiency, and Speed.

The starting point of this work is recalling that the Birkhoff polytope
is a special case of a transportation polytope,

\[
\mathcal{T}(r,c)=\left\{ X\in\mathbb{R}_{\ge0}^{n\times n}\;\middle|\;X\mathbf{1}=r,\;X^{\top}\mathbf{1}=c\right\} ,
\]
with $r=c=\mathbf{1}$ . Transportation polytopes are fundamental
objects in operations research (see, for example, \citep{taha2017operations,Brualdi_2006}).
We revisit the problem from a transportation perspective and propose
the Transportation Birkhoff Polytope (TBP) parameterization for the
whole $\mathcal{B}_{n}$. Our construction is based on the classical
North-West Corner greedy algorithm \citep[Ch. 5.3.1]{taha2017operations}
and gives
\begin{itemize}
\item Exact doubly stochasticity (no approximation),
\item Minimal parameterization with $(n-1)^{2}$ degrees of freedom,
\item Full expressivity, covering all of $\mathcal{B}_{n}$.
\end{itemize}
TBP is optimal on exactness, expressivity, and memory efficiency,
and so the trade-off it makes must be the speed. It originates in
the sequential structure of the algorithm, which restricts parallelism.
It is a common feature of many optimal transport algorithms, limiting
their use in machine learning. 

To address this, we introduce Recursive TBP (RTBP), which decomposes
the problem into block subproblems. For example, partitioning the
matrix into four blocks:

\[
H_{l}^{res}=
\begin{pmatrix}
X_{11} & X_{12}\\
X_{21} & X_{22}\\
\end{pmatrix},
\]

we recursively construct each block subject to consistency constraints

\[
\sum X_{11}+\sum X_{12}=r_{\text{top}},\quad\sum X_{11}+\sum X_{21}=c_{\text{left}}.
\]

This hierarchical decomposition enables partial parallelization, which
increases the speed of the algorithm, while preserving exact feasibility.

We conduct numerical experiments to validate the effectiveness of
TBP and RTBP. We consider several modifications that make training
more effective and compare the performance with mHC, mHC-lite, and
KromHC. Our work does not consider the later approach of \citep{gomhc2026}.
We note that, however, \citep{gomhc2026} does not cover the full
Birkhoff polytope and uses a family of subsets, depending on a hyperparameter
that, with its growth, produce a dense subset of $\mathcal{B}_{n}$
in the limit.

Our results show that (R)TBP is a strong alternative competitor with
qualitative advantages. We also discover certain properties of other
algorithms that are not covered in their original papers.

In summary, our contributions are:
\begin{itemize}
\item We introduce TBP, an exact and minimal parameterization of the Birkhoff
polytope based on transportation polytope decomposition.
\item We develop RTBP, a recursive formulation that improves computational
efficiency via hierarchical parallelization.
\item We provide theoretical analysis showing that TBP and RTBP achieves
full expressivity with optimal parameterization.
\item We demonstrate empirically that TBP and RTBP achieve competitive performance
with mHC, mHC-lite, and KromHC, while offering a principled trade-off
between exactness, efficiency, expressivity, and speed.
\end{itemize}

\section{Background.}

\begin{table}[t]
\centering
\caption{Comparison of hyper-connection parameterizations.}
\label{tab:mhc-comparison}
\resizebox{\linewidth}{!}{%
\begin{tabular}{lcccccc}
\hline
\textbf{Method} & \textbf{Formula} & \textbf{Exactness} & \textbf{Expressivity} & \textbf{Memory} & \textbf{Speed} & \textbf{Stability} \\
\hline
mHC 
& $H^{res}=\mathrm{SK}(A)$ 
& $\times$ 
& full (approx.) 
& moderate 
& medium 
& moderate \\

mHC-lite 
& $H^{res}=\sum_{k=1}^K \alpha_k\,P_k,$ 
& \checkmark 
& full (as $K\!\to\! n!$) 
& poor 
& fast 
& high \\

& $\alpha=\mathrm{softmax}(\theta)$& & & & & \\

KromHC 
& $H^{res}=\bigotimes_{i=1}^m U_i$ 
& \checkmark 
& restricted 
& good 
& fast 
& moderate \\

TBP
& $H^{res}=\Phi_{\mathrm{TBP}}(t),$ 
& \checkmark 
& full 
& optimal 
& slow 
& high \\

RTBP 
& $H^{res}=\Phi_{\mathrm{RTBP}}(t),$ 
& \checkmark 
& full 
& optimal 
& medium 
& high \\

& $t\in\mathbb{R}^{(n-1)^2}$& & & & & \\

\hline
\end{tabular}}
\end{table}

\subsection{Residual Connections and Hyper-Connections (HC)}

Residual connections \citep{he2016deep} form the backbone of
modern deep neural networks by enabling stable optimization across
depth. A standard residual block updates a representation $x_{l}\in\mathbb{R}^{C}$
as

\[
x_{l+1}=x_{l}+F_{l}(x_{l},W_{l}),
\]
where $F_{l}:\mathbb{R}^{C}\to\mathbb{R}^{C}$ is a learnable transformation,
typically composed of attention and feed-forward sublayers. The identity
mapping ensures that information can propagate across layers without
degradation, mitigating vanishing and exploding gradient issues.

Hyper-Connections (HC, \citep{HC}) generalize this paradigm by
introducing multiple interacting residual streams. Instead of a single
vector, the representation at layer $l$ is $X_{l}\in\mathbb{R}^{n\times C}$,
where $n$ denotes the number of residual streams and each row corresponds
to a feature representation of dimension $C$ . The HC layer introduces
learnable mixing operators that propagate information across these
streams.

Formally, an HC layer is defined as

\[
X_{l+1}=H_{l}^{\mathrm{res}}X_{l}+(H_{l}^{\mathrm{post}})^{\top}F_{l}\!\left(H_{l}^{\mathrm{pre}}X_{l}\right),
\]
where $H_{l}^{\mathrm{res}}\in\mathbb{R}^{n\times n}$ is the residual mixing matrix, $H_{l}^{\mathrm{pre}}\in\mathbb{R}^{1\times n}$ aggregates streams before applying the nonlinear transformation $F_{l}$ ,
$H_{l}^{\mathrm{post}}\in\mathbb{R}^{1\times n}$ redistributes the transformed features back to the streams.

The term $H_{l}^{\mathrm{res}}X_{l}$ generalizes the identity connection
by allowing linear mixing across streams, while the second term corresponds
to a shared nonlinear transformation applied to a pooled representation
and then broadcasted back.

To make this explicit, let $\tilde{X}_{l}=H_{l}^{\mathrm{pre}}X_{l}\in\mathbb{R}^{1\times C}$, then $F_{l}(\tilde{X}_{l})\in\mathbb{R}^{1\times C}$,
and the broadcast step yields $(H_{l}^{\mathrm{post}})^{\top}F_{l}(\tilde{X}_{l})\in\mathbb{R}^{n\times C}$. Thus, each output stream is a combination of (i) linearly mixed previous
streams and (ii) a shared nonlinear update.

In the original HC, if we use $x_{l}$ for the flattened version of
$X_{l}$, concatenated from the $n\times C$ matrix into $1\times nC$
row-vector, then

\begin{equation}
\begin{cases}
\tilde{x}_{l}=\mathrm{RMSNorm}(x_{l}),\\[6pt]
H_{l}^{\mathrm{pre}}=\alpha_{l}^{\mathrm{pre}}\cdot\tanh\!\big(\theta_{l}^{\mathrm{pre}}\tilde{x}_{l}^{\top}\big)+b_{l}^{\mathrm{pre}},\\[6pt]
H_{l}^{\mathrm{post}}=\alpha_{l}^{\mathrm{post}}\cdot\tanh\!\big(\theta_{l}^{\mathrm{post}}\tilde{x}_{l}^{\top}\big)+b_{l}^{\mathrm{post}},\\[6pt]
H_{l}^{\mathrm{res}}=\alpha_{l}^{\mathrm{res}}\cdot\tanh\!\big(\theta_{l}^{\mathrm{res}}\tilde{x}_{l}^{\top}\big)+b_{l}^{\mathrm{res}},
\end{cases}\label{eq:hc}
\end{equation}
where $\mathrm{RMSNorm}(\cdot)$ refers to RMSNorm of \citep{zhang2019root}.

While HC increases expressivity by enabling cross-stream interactions,
it also introduces potential instability, as discussed in the introduction.

\subsection{Manifold-Constrained Hyper-Connections (mHC)}

Manifold-Constrained Hyper-Connections \citep{mHC} aims to correct
this problem by modifying the computation of $H_{l}^{pre}$ , $H_{l}^{post}$
and $H_{l}^{res}$, most importantly by constraining $H_{l}^{res}$
to the Birkhoff polytope $\mathcal{B}_{n}$ \citep{birkhoff1946},
i.e., the set of doubly stochastic matrices, 

\[
\mathcal{B}_{n}\;=\;\left\{ X\in\mathbb{R}_{\ge0}^{n\times n}\;\middle|\;X\mathbf{1}=\mathbf{1},\;X^{\top}\mathbf{1}=\mathbf{1}\right\} ,
\]
where $1_{n}$ denotes the vector of all ones and $X\ge0$ is elementwise.
Since doubly stochastic matrices have spectral norms equal to 1, and
the set is closed under matrix multiplication \citep{birkhoff1946},
this adjustment restores the stability of the residual mapping across
layers.

The constraint of $H_{l}^{res}$ to $\mathcal{B}_{n}$ is done using
Sinkhorn-Knopp projection \citep{sinkhorn1967concerning}, thus
giving the mappings for mHC in the following form:

\begin{equation}
\begin{cases}
x'_{l}=\mathrm{RMSNorm}(x_{l}),\\[6pt]
H_{l}^{\mathrm{pre}}=\sigma\!\big(\alpha_{l}^{\mathrm{pre}}\,x'_{l}W_{l}^{\mathrm{pre}}+b_{l}^{\mathrm{pre}}\big),\\[6pt]
H_{l}^{\mathrm{post}}=2\,\sigma\!\big(\alpha_{l}^{\mathrm{post}}\,x'_{l}W_{l}^{\mathrm{post}}+b_{l}^{\mathrm{post}}\big),\\[6pt]
H_{l}^{\mathrm{res}}=\mathrm{SK}\!\Big(\alpha_{l}^{\mathrm{res}}\cdot\mathrm{mat}\!\big(x'_{l}W_{l}^{\mathrm{res}}\big)+b_{l}^{\mathrm{res}}\Big),
\end{cases}\label{eq:mhc}
\end{equation}
where $W_{l}^{\mathrm{pre}},W_{l}^{\mathrm{post}}\in\mathbb{R}^{nC\times n},W_{l}^{\mathrm{res}}\in\mathbb{R}^{nC\times n^{2}}$
are learnable weight matrices, $b_{l}^{\mathrm{pre}},b_{l}^{\mathrm{post}}\in\mathbb{R}^{1\times n},b_{l}^{\mathrm{res}}\in\mathbb{R}^{1\times n^{2}}$
are learnable bias terms, $\alpha_{l}^{\mathrm{pre}},\alpha_{l}^{\mathrm{post}},\alpha_{l}^{\mathrm{res}}$
are learnable scalars, $\mathrm{mat}(\cdot):\mathbb{R}^{1\times n^{2}}\to\mathbb{R}^{n\times n}$
reshapes a vector into a matrix, $\sigma(\cdot)$ denotes the sigmoid
function, and $\mathrm{SK}(\cdot)$ is the Sinkhorn-Knopp operator. 

However Sinkhorn-Knopp algorithm is approximate and does not guarantee
double stochasticity in finite number of steps. This may lead to potential
instability, as shown on example in \citep{mhc-lite} with 20
number of iterations in SK algorithm. Also accelerating SK algorithm
requires customized kernels, thus increasing engineering complexity. 

\subsection{mHC-lite}

mHC-lite \citep{mhc-lite} presents a way to correct the limitations
of mHC, based on Birkhoff-von Neumann theorem \citep{birkhoff1946,Neumann53},
which states that any doubly stochastic matrix is a convex combination
of permutation matricies (see Table \ref{tab:mhc-comparison} and Appendix \ref{sec:Parametrizations}).

In this setting
$H_{l}^{\mathrm{res}}\in\mathbb{R}^{n\times n}$ is doubly stochastic
by construction. However, since there are $n!$ permutation matrices
of size $n\times n$, the parameter complexity grows as $O(nC\cdot n!)$,
preventing effective scaling.

\subsection{KromHC}

Motivated by the need of an alternative that is free from the limitations
of mHC and mHC-lite, \citep{KromHC} proposed KromHC parametrization.
\citep{KromHC} compose $H_{l}^{res}$ as Kronecker product of
smaller doubly stochastic matrices, based on factorization $n=i_{1}\cdot i_{2}\cdot\dots\cdot i_{K}$  (see Table \ref{tab:mhc-comparison} and Appendix \ref{sec:Parametrizations}). 

The number of learnable parameters in KromHC scales with $\Sigma\,i_{k}!$,
and since $i_{k}$ is usually small, compared to $n$, the parameter
complexity of KromHC grows as $O(Cn^{2})$.

The limitation of KromHC is that $H_{l}^{res}$ is restricted to a
structured subset of the Birkhoff polytop.

\citep{KromHC} conducted extensive experiments showing that this
does not prevent KromHC from achieving top results on various benchmarks,
making it a strong alternative to mHC and mHC-lite, despite drawbacks
of optimizing over a subset.

\subsection{Transportation polytopes \citep{taha2017operations,Brualdi_2006}}

Let $M=m(X)={\displaystyle \sum_{i,j}^{n,m}x_{ij}}$ be the total
mass of the matrix $X$. For given vectors $r=(r_{1},\dots,r_{n})$
and $c=(c_{1},\dots,c_{m})$ of row and column sums respectively, 

\[
r_{i}>0,\quad c_{j}>0,\quad\sum_{i}r_{i}=\sum_{j}c_{j}=M,
\]
the transportation polytope $\mathcal{T}(r,c)$ is the set of $n\times m$
matrices with given row and column sums, 

\[
\mathcal{T}(r,c)=\{X\in\mathbb{R}_{\ge0}^{n\times m}:\;X\mathbf{1}=r,\;X^{\top}\mathbf{1}=c\}.
\]
The topological dimension of $\mathcal{T}(r,c)$ is $(n-1)(m-1)$.

\subsubsection{Block decomposition of a transportation polytope}

The following construction is elementary, but is very hard to find in the literature.

Let $1\le k<n$ , $1\le l<m$ , and partitioning the indices $I_{1}=\{1,\dots,k\},$, $I_{2}=\{k+1,\dots,n\}$, $J_{1}=\{1,\dots,l\}$, and $J_{2}=\{l+1,\dots,m\}$.
Let also $r'$, $r''$, $c'$, and $c''$ be another set of positive margins, such that $r'+r''=r$, $c'+c''=c$, and partitioned vectors $r_{I_{1}}'=(r'_{i})_{i\in I_{1}}$, $r_{I_{1}}''=(r''_{i})_{i\in I_{1}}$, $r_{I_{2}}'=(r'_{i})_{i\in I_{2}}$, $r_{I_{2}}''=(r''_{i})_{i\in I_{2}}$, $c_{J_{1}}'=(c'_{j})_{j\in J_{1}}$, $c_{J_{2}}'=(c'_{j})_{j\in J_{2}}$, $c_{J_{1}}''=(c''_{j})_{j\in J_{1}}$, $c_{J_{2}}''=(c''_{j})_{j\in J_{2}}$ satisfy block balance conditions (See details in Appendix \ref{sec:Decomposition}).

Then we have a partitioning of $\mathcal{T}(r,c)$, that we write symbolically as

\begin{equation}
\mathcal{T}(r,c)=\bigcup_{r',r'',c',c''}\begin{pmatrix}\mathcal{T}(r'_{I_{1}},c'_{J_{1}}) & \mathcal{T}(r''_{I_{1}},c'_{J_{2}})\\
\mathcal{T}(r'_{I_{2}},c''_{J_{1}}) & \mathcal{T}(r''_{I_{2}},c''_{J_{2}})
\end{pmatrix}\label{eq:dectp}
\end{equation}

for all possible choices of $r'$, $r''$, $c'$, and $c''$, satisfying above conditions, i.e. a matrix $X$ can be written in block form

\[
X=\begin{pmatrix}X_{11} & X_{12}\\
X_{21} & X_{22}
\end{pmatrix},
\]
such that $X_{11}\in\mathcal{T}(r'_{I_{1}},c'_{J_{1}})$, $X_{12}\in\mathcal{T}(r''_{I_{1}},c'_{J_{2}})$, $X_{21}\in\mathcal{T}(r'_{I_{2}},c''_{J_{1}})$, $X_{22}\in\mathcal{T}(r''_{I_{2}},c''_{J_{2}})$.
For details see Appendix \ref{sec:Decomposition}.

\begin{table}[t]
\centering
\caption{Validation performance across experiments. Lower is better. Experiment 1 is the average of 5 runs with 5 different seeds. Experiments 2,3,4 are single run with the same seed.}
\label{tab:main}
\resizebox{\linewidth}{!}{%
\begin{tabular}{lcccc}
\hline
\textbf{Model} 
& \textbf{Exp. 1} 
& \textbf{Exp. 2} 
& \textbf{Exp. 3} 
& \textbf{Exp. 4} \\
& (loss / bpb) & (loss / bpb) & (loss / bpb) & (loss / bpb) \\
\hline
mHC        
& \textbf{3.48348} / \textbf{1.13203} 
& \uuline{3.49746} / \uuline{1.13848} 
& \uuline{3.27135} / \uuline{1.06487} 
& \uline{3.26951} / \uline{1.06427} \\

mHC-lite   
& 3.48742 / 1.13334 
& 3.50663 / 1.14146 
& \textbf{3.26555} / \textbf{1.06299} 
& \textbf{3.26105} / \textbf{1.06152} \\

KromHC     
& \uuline{3.48722} / \uuline{1.13330} 
& \textbf{3.49513} / \textbf{1.13772} 
& 3.27594 / 1.06637
& 3.29645 / 1.07304 \\

oRTBP-mHC  
& \uline{3.48589} / \uline{1.13282} 
& \uline{3.49743} / \uline{1.13846} 
& -- 
& \uuline{3.27600} / \uuline{1.06639} \\

LTBP-mHC   
& -- 
& 3.49952 / 1.13914 
& -- 
& -- \\

lmaLTBP-mHC 
& -- 
& 3.50218 / 1.14001 
& 3.27336 / 1.06553 
& -- \\

amsTBP-mHC 
& -- 
& 3.50336 / 1.14040 
& -- 
& -- \\

msTBP-mHC  
& -- 
& 3.50599 / 1.14112 
& \uline{3.26911} / \uline{1.06414} 
& -- \\

asTBP-mHC  
& -- 
& 3.50599 / 1.14125 
& 3.28178 / 1.06827 
& -- \\

RTBP-mHC   
& -- 
& -- 
& 3.27767 / 1.06667 
& -- \\

sRTBP-mHC  
& -- 
& -- 
& 3.27778 / 1.06697 
& -- \\

aTBP-mHC   
& -- 
& -- 
& 3.27438 / 1.06586 
& -- \\

TBP-mHC    
& -- 
& -- 
& 3.27767 / 1.06693 
& -- \\

\hline
\end{tabular}
}
\end{table}

\section{Method}

TBP changes parametrization of $H_{l}^{res}$ to a global chart for
the interior of the Birkhoff polytope. This guarantees the exact double
stochasticity, full expressivity and minimal parameter count.

The full algorithms are given in appendix \ref{sec:Algorithms}, proofs
of the theorems are in appendix \ref{sec:Proofs}.

\subsection{Transportation Birkhoff Polytope (TBP) Algorithm}

TBP generates a doubly stochastic matrix $X$ by directly solving
transportation polytope and generating elements of $X$ row-by-row,
from left to right, from free parameters $(t_{ij})_{i,j=1}^{n-1}$
updating the remaining budgets in rows and columns on every step. 

Let on the step $(i,j)$ the $(r_{i})_{i=1}^{n}$ and $(c_{j})_{j=1}^{m}$
be the remaining budgets in rows and columns respectively. Then the
feasibility region for $x_{ij}$ is the interval $[L_{ij},U_{ij}]$,
with 

- $L_{ij}=\max\left(0,\;r_{i}-\sum_{k=j+1}^{n}c_{k},\;c_{j}-\sum_{k=i+1}^{n}r_{k}\right)$
- forces $x_{ij}$ to be large enough so that the remaining sum of
the row $r_{i}$ doesn't exceed the total remaining capacity of the
remaining columns;

- $U_{ij}=\min(r_{i},\;c_{j})$ - you cannot ship more than a row
supplies or a column demands.

Then we generate $x_{ij}$:

\[
x_{ij}=L_{ij}+\Delta_{ij}\,\sigma(t_{ij})
\]

where $\sigma(t)$ is the sigmoid function, $\Delta_{ij}=U_{ij}-L_{ij}$,
and update the budgets:

\[
r_{i}\leftarrow r_{i}-x_{ij},\quad c_{j}\leftarrow c_{j}-x_{ij}.
\]

The last element in every row is $x_{in}=r_{i}$, and for the last
row $x_{nj}=c_{j}$.
\begin{thm}
\label{thm:TBP-defines-a}TBP defines a bijective chart on the interior
of $\mathcal{T}(r,c)$.
\end{thm}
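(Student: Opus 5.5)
The plan is to show that the map $\Phi:t=(t_{ij})_{i,j=1}^{n-1}\in\mathbb{R}^{(n-1)(m-1)}\mapsto X$ is well defined and lands in the interior of $\mathcal{T}(r,c)$. I will then build an explicit inverse on the interior, and finally check that both maps are smooth, so that $\Phi$ is a diffeomorphism onto its image. The argument runs by induction over the cells $(i,j)$ in the row-by-row, left-to-right order of the algorithm. The invariant to maintain is: before step $(i,j)$, the remaining budgets $r,c$ are strictly positive on all unprocessed rows and columns, and the remaining transportation problem (rows $i,\dots,n$, with row $i$ restricted to columns $j,\dots,m$) is feasible with balanced totals. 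Under this invariant I will verify that $L_{ij}<U_{ij}$. Each of $0$, $r_i-\sum_{k>j}c_k$ and $c_j-\sum_{k>i}r_k$ is strictly less than $\min(r_i,c_j)$. The first comparison uses positivity of the budgets. The second and third use $j<m$ and $i<n$, so the subtracted sums involve at least one positive budget. Because $\sigma(t)\in(0,1)$, it follows that $x_{ij}\in(L_{ij},U_{ij})$ strictly.

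Next I will check that the strict inequalities propagate the invariant. From $x_{ij}<U_{ij}$, the new row and column budgets stay positive. From $x_{ij}>L_{ij}$, the remaining row budget $r_i-x_{ij}$ is strictly less than $\sum_{k>j}c_k$, and $c_j-x_{ij}<\sum_{k>i}r_k$. This is exactly what is needed for the last-column entry $x_{im}=r_i$ and the last-row entries $x_{nj}=c_j$ to be strictly positive. It also makes them consistent with the remaining column sums, and the balance $\sum r=\sum c$ closes the final corner entry $x_{nm}$. So $X$ has all entries positive and the prescribed margins, i.e.\ $X\in\operatorname{int}\mathcal{T}(r,c)$. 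Here the interior is taken relative to the affine hull, which, since the dimension is $(n-1)(m-1)$, equals the set of strictly positive points.

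For bijectivity I will construct the inverse directly. Given $X$ in the interior, I run the same sweep, computing $L_{ij},U_{ij}$ from the budgets left after subtracting the already-read entries of $X$. I then set $t_{ij}=\sigma^{-1}\!\big((x_{ij}-L_{ij})/\Delta_{ij}\big)$. The step I expect to be the main obstacle is the converse feasibility claim: for every interior $X$ and every $(i,j)$ with $i<n,j<m$, one has $L_{ij}<x_{ij}<U_{ij}$. The upper bound holds because the remaining budgets are sums of strictly positive entries of $X$ that include $x_{ij}$ and at least one other entry (from column $m$, respectively row $n$). For the lower bound, I will rewrite $r_i-\sum_{k>j}c_k-x_{ij}$ in terms of entries of $X$: it equals minus the sum of the positive entries of $X$ in rows $>i$ and columns $>j$. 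This forces the expression to be negative, and the third term $c_j-\sum_{k>i}r_k-x_{ij}$ is handled symmetrically. Once this is established, the two sweeps are mutually inverse by construction, since each $x_{ij}$ determines $t_{ij}$ uniquely through the strictly monotone $\sigma$.

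Finally, smoothness follows because each step composes $\sigma$, affine maps, and $\max/\min$. The $\max/\min$ are only piecewise smooth, so I will state the chart as a homeomorphism that is smooth away from the measure-zero set where the arguments of $\max/\min$ tie. If full $C^\infty$ is demanded, this may need a separate remark. The strictness established above guarantees that the Jacobian of each coordinate step, $\partial x_{ij}/\partial t_{ij}=\Delta_{ij}\sigma'(t_{ij})>0$, is nonzero. The Jacobian is triangular in the sweep order, which gives local invertibility.
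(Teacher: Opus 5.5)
Your proposal is correct and follows the same route as the paper: invert the sweep cell by cell via $t_{ij}=\sigma^{-1}\bigl((x_{ij}-L_{ij})/\Delta_{ij}\bigr)$. It also supplies what the paper's proof leaves implicit, namely that $L_{ij}<U_{ij}$ along the forward sweep so the output is strictly positive, and that $L_{ij}<x_{ij}<U_{ij}$ for every interior $X$, which is what makes the inverse defined on the whole interior.

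Two small tightenings. First, $r_i-\sum_{k>j}c_k<c_j$ needs the strict bound $r_i<\sum_{k\ge j}c_k$, inherited from step $(i,j-1)$ (or from balance and $i<n$ when $j=1$). So state your invariant with strict feasibility rather than mere feasibility. Second, drop the opening promise of a diffeomorphism: the $\max/\min$ kinks genuinely occur in the interior, so a piecewise-smooth homeomorphism is the right conclusion, and it is all the theorem needs.
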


Applied to generating $H_{l}^{res}$ in the Birkhoff polytope, the
structure of TBP-HC is the following:

\[
\begin{cases}
\hat{x}'_{l}=\mathrm{RMSNorm}(\hat{x}_{l}),\\[6pt]
H_{l}^{\mathrm{pre}}=\sigma\!\big(\alpha_{l}^{\mathrm{pre}}\,\hat{x}'_{l}W_{l}^{\mathrm{pre}}+b_{l}^{\mathrm{pre}}\big),\\[6pt]
H_{l}^{\mathrm{post}}=2\,\sigma\!\big(\alpha_{l}^{\mathrm{post}}\,\hat{x}'_{l}W_{l}^{\mathrm{post}}+b_{l}^{\mathrm{post}}\big),\\[6pt]
H_{l}^{\mathrm{res}}={\displaystyle {\mathrm{TBP}(\hat{x}'_{l}W_{l}^{\mathrm{res}}+b_{l}^{\mathrm{res}})},}
\end{cases}
\]

where all parameters have the same meaning as in mHC.

The parameter complexity of TBP-HC is similar to mHC,

\[
P_{\mathrm{TBP,\ current}}\approx n^{3}C+(2C+1)n^{2}+2n+3.
\]

\subsection{Recursive Transportation Birkhoff Polytope (RTBP) Algorithm}

The main drawback of TBP is its speed. As a fully sequential algorithm
- it looses the speed competition to other mHC variants. We attempt
to improve its performance with Recursive Transportation Birkhoff
Polytope (RTBP) algorithm.

RTBP constructs $X\in\mathcal{T}(r,c)$ by recursively splitting the
matrix into $2\times2$ blocks, ensuring feasibility, using the decomposition
from (\ref{eq:dectp}), and then solving a smaller transportation
polytope.

Let $I_{1},I_{2},J_{1},J_{2}$ be as in (\ref{eq:part}), 
$R_{1}=\sum_{i\in I_{1}}r_{i}$, $R_{2}=\sum_{i\in I_{2}}r_{i}$, $C_{1}=\sum_{j\in J_{1}}c_{j}$, $C_{2}=\sum_{j\in J_{2}}c_{j}$.

We are going to construct

\[
X=\begin{pmatrix}X_{11} & X_{12}\\
X_{21} & X_{22}
\end{pmatrix},
\]
with total masses $M=m(X)$, $M_{11}=m(X_{11})$, $M_{12}=m(X_{12})$,
$M_{21}=m(X_{21})$, $M_{22}=m(X_{22})$, from $(n-1)(m-1)$ free
parameters $t_{ij}$.

Feasibility requires $\max\big(0,\;R_{1}-C_{2}\big)\;\le\;M_{11}\;\le\;\min(R_{1},\;C_{1})$,
and then 

\[
M_{12}=R_{1}-M_{11},\quad M_{21}=C_{1}-M_{11},\quad M_{22}=R_{2}-M_{21}=C_{2}-M_{12}.
\]
Thus the choice of $M_{11}$ defines the splitting of the mass.

The partitioned margins $r'$, $r''$, $c'$, and $c''$, such that
$r'+r''=r$, $c'+c''=c$, with partitioned vectors $r_{I_{1}}'=(r'_{i})_{i\in I_{1}}$, $r_{I_{1}}''=(r''_{i})_{i\in I_{1}}$, $r_{I_{2}}'=(r'_{i})_{i\in I_{2}}$, $r_{I_{2}}''=(r''_{i})_{i\in I_{2}}$, $c_{J_{1}}'=(c'_{j})_{j\in J_{1}}$, $c_{J_{2}}'=(c'_{j})_{j\in J_{2}}$, $c_{J_{1}}''=(c''_{j})_{j\in J_{1}}$, $c_{J_{2}}''=(c''_{j})_{j\in J_{2}}$ 
have to satisfy block balance conditions:

\[
\sum_{i\in I_{1}}r'_{i}=\sum_{j\in J_{1}}c'_{j}=M_{11},\qquad\sum_{i\in I_{1}}r''_{i}=\sum_{j\in J_{2}}c'_{j}=M_{12},
\]

\[
\sum_{i\in I_{2}}r'_{i}=\sum_{j\in J_{1}}c''_{j}=M_{21},\qquad\sum_{i\in I_{2}}r''_{i}=\sum_{j\in J_{2}}c''_{j}=M_{22}.
\]

Thus we get $X_{11}\in\mathcal{T}(r'_{I_{1}},c'_{J_{1}})$, $X_{12}\in\mathcal{T}(r''_{I_{1}},c'_{J_{2}})$,
$X_{21}\in\mathcal{T}(r'_{I_{2}},c''_{J_{1}})$, $X_{22}\in\mathcal{T}(r'_{I_{2}},c''_{J_{2}}),$
with each of them solved recursively. When recursion reaches $n$
or $m$ equal to 1 - the respective polytope dimension is 0, with
only one immediate solution (in practice, we construct the polytopes
explicitly already for $2\times2$ case).
\begin{thm}
\label{thm:RTBP-algorithm-uses}RTBP algorithm uses exactly $(n-1)(m-1)$
free parameters.
\end{thm}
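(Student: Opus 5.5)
We will argue by strong induction on the size $(n,m)$, proving that the number $P(n,m)$ of free parameters consumed by RTBP on an $n\times m$ transportation polytope equals $(n-1)(m-1)$. The base cases are immediate. When $n=1$ or $m=1$ the polytope is a single point, since the matrix is forced to equal $c^{\top}$ or $r$. In that case no parameter is used, and $(n-1)(m-1)=0$. The explicitly constructed $2\times 2$ case uses the single parameter that fixes $x_{11}$, which matches $(2-1)(2-1)=1$.

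For the inductive step, we fix a split $|I_1|=k$, $|I_2|=n-k$, $|J_1|=l$, $|J_2|=m-l$ with $1\le k<n$, $1\le l<m$, and count the parameters spent at the current level before recursing. First, one parameter $t$ determines $M_{11}$ within $[\max(0,R_1-C_2),\min(R_1,C_1)]$, say by a sigmoid as in TBP. The other block masses are then forced by the relations stated before the theorem.

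Second, we count the parameters needed to choose the split margins. The row split $r=r'+r''$ must satisfy $\sum_{I_1}r'=M_{11}$ and $\sum_{I_2}r'=M_{21}$, with $r''$ determined by $r'$. Choosing $r'_{I_1}$ is therefore a point of a $(k-1)$-simplex-type interval region, which costs $k-1$ parameters. Choosing $r'_{I_2}$ costs $n-k-1$. The total for rows is $n-2$, and the column split analogously costs $m-2$. Each such choice can be realised by a TBP-like sequential interval parameterisation, i.e.\ a $1\times k$ or $2\times k$ transportation problem, so the counts come from the dimension formula $(n-1)(m-1)$ applied to small polytopes. The margins of the four subproblems are then fixed. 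By the induction hypothesis they consume
\[
(k-1)(l-1)+(k-1)(m-l-1)+(n-k-1)(l-1)+(n-k-1)(m-l-1)=(n-2)(m-2)
\]
parameters.

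Adding these contributions gives
\[
1+(n-2)+(m-2)+(n-2)(m-2)=(n-1)(m-1),
\]
which closes the induction.

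The main obstacle is making the margin-splitting count precise and consistent with feasibility. The split of row $i$ into $r'_i,r''_i$ interacts with the column split, because the four blocks must be simultaneously feasible, for instance $r'_i\le C_1$-type constraints that are automatically satisfied only when the margins are positive and the block totals balance. I would handle this by observing that, for strictly positive margins, each block polytope $\mathcal{T}(\cdot,\cdot)$ is nonempty as soon as its balance condition holds. Hence the only coupling between blocks is through the four balance equations, which are already accounted for by $M_{11}$. A cross-check is the dimension count: the map from parameters to $X$ is injective on interiors, by Theorem~\ref{thm:TBP-defines-a} applied blockwise together with the uniqueness of the recovered block sums and margins from $X$. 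Its image has dimension $(n-1)(m-1)$, so no fewer parameters would suffice, confirming the exact count.
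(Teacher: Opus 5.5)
Your argument is correct and is essentially the paper's proof. The paper's lemma counts the split as $(n_1-1)+(n_2-1)+(m_1-1)+(m_2-1)+1=n+m-3$, which equals your $1+(n-2)+(m-2)$, and its induction over the four sub-blocks closes with the same $(n-2)(m-2)$ computation. Your closing cross-check is not needed and has the direction slightly off: injectivity of the chart only bounds the parameter count from above, while the claim that no fewer parameters would suffice needs the image to fill the interior (surjectivity, which follows from the converse half of the block decomposition).
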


Thus the structure of RTBP-HC is similar to TBP-HC:

\[
\begin{cases}
\hat{x}'_{l}=\mathrm{RMSNorm}(\hat{x}_{l}),\\[6pt]
H_{l}^{\mathrm{pre}}=\sigma\!\big(\alpha_{l}^{\mathrm{pre}}\,\hat{x}'_{l}W_{l}^{\mathrm{pre}}+b_{l}^{\mathrm{pre}}\big),\\[6pt]
H_{l}^{\mathrm{post}}=2\,\sigma\!\big(\alpha_{l}^{\mathrm{post}}\,\hat{x}'_{l}W_{l}^{\mathrm{post}}+b_{l}^{\mathrm{post}}\big),\\[6pt]
H_{l}^{\mathrm{res}}={\displaystyle {\mathrm{RTBP}(\hat{x}'_{l}W_{l}^{\mathrm{res}}+b_{l}^{\mathrm{res}})},}
\end{cases}
\]
where all parameters have the same meaning as in mHC, and the same
parameter complexity.

\section{Experiments}

We implement all methods extending the framework of \citep{mhc-lite}
and \citep{KromHC}, which in their turn extend nanoGPT training
framework \citep{karpathy2022nanogpt}. We have refactored the
implementation, which has improved the performance of all models.
Due to computational constraint, we run four different experiments
on four different nodes and pre-train all models on OpenWebText \citep{Gokaslan2019OpenWeb}
only, with 10000 training iterations. We consider two main model scales.
The Small configuration uses 6 Transformer layers, 8 attention heads,
and embedding dimension 512, with peak learning rate $1e-3$. The
Medium configuration uses 12 layers, 12 heads, and embedding dimension
768, with peak learning rate $6e-4$. Throughout all experiments the
number of residual streams is set to 4. 

The runs of KromHC with gradient instability were not included into
these results.

Further details, including initialization settings, can be found in appendix \ref{sec:Experiments.}.

\subsection{Implementation.}

We evaluate several implementation variants of the same exact TBP chart. 
Scaled TBP (sTBP) rescales local logits to reduce vanishing sensitivity in narrow intervals; 
margined TBP (mTBP) keeps choices away from interval boundaries to prevent saturation; averaged TBP (aTBP) reduces ordering bias by averaging over permutations; lazy/minorized variants (lmTBP) enforce additional spectral mixing; linearized TBP (LTBP) uses raw logits without $\sigma(\cdot)$ function (clipped between 0 and 1).

Same modifications apply to RTBP. Optimized RTBP (oRTBP) places recursive chart parameters to a separate optimizer group with no weight decay.

Variants are usually combined as msTBP - margined+scaled, amsTBP - adds averaging, and so on.

Full definitions are given in Appendix \ref{sec:Implementations.}.

\subsection{Training and validation results}

We compare the performance of TBP and RTBP variants with mHC, mHC-lite
and KromHC. We report the cross-entropy loss $\mathcal{L}_{CE}=-\mathbb{E}\left[\log p_{\theta}\left(x_{t}\mid x<t\right)\right]$
and bits-per-byte $BPB=\frac{\mathcal{L}_{CE}}{\ln2\cdot\text{tokens\_per\_byte}}$
for both training and validation. We found that the results are volatile
due to changes of the seed and hardware configuration. In table \ref{tab:main}
we report validation loss and BPB at the end of training, training results can be found in appendix  \ref{sec:Experiments.}.

Overall, in each experiment the models from TBP and RTBP families
showed competitive results. 

\subsection{Gradient norms}

Figure \ref{fig:gradient-norms} represents the gradient norms in all four experiments.
The results show that (R)TBP variants consistently achieve lower gradient
norms, compared to mHC, mHC-lite, and KromHC, confirming its greater
stability in training.

\begin{figure}[t]
\centering

\begin{subfigure}{0.48\linewidth}
    \centering
    \includegraphics[width=\linewidth]{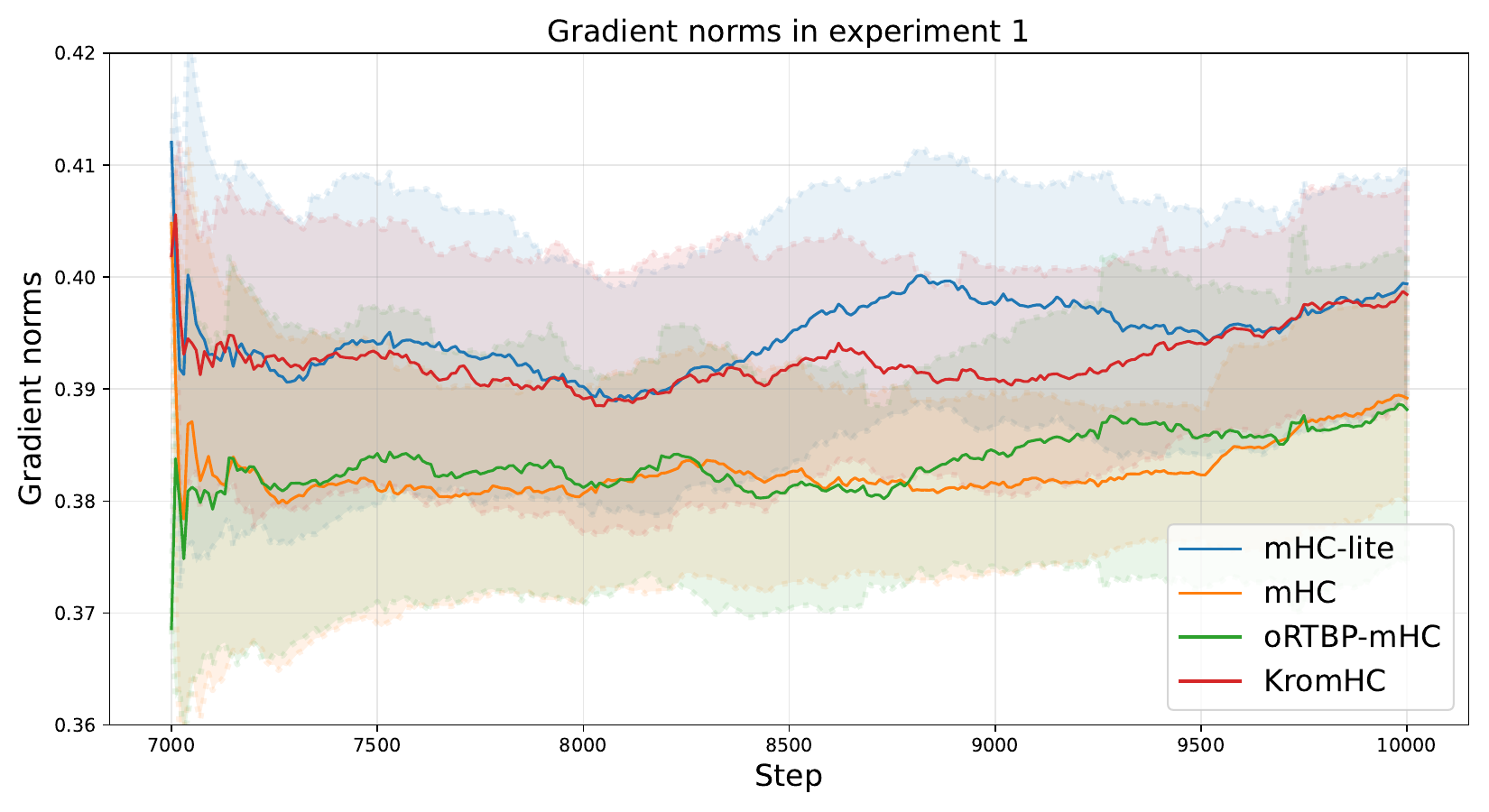}
    \caption{Experiment 1}
    \label{fig:grad-norm-1}
\end{subfigure}
\hfill
\begin{subfigure}{0.48\linewidth}
    \centering
    \includegraphics[width=\linewidth]{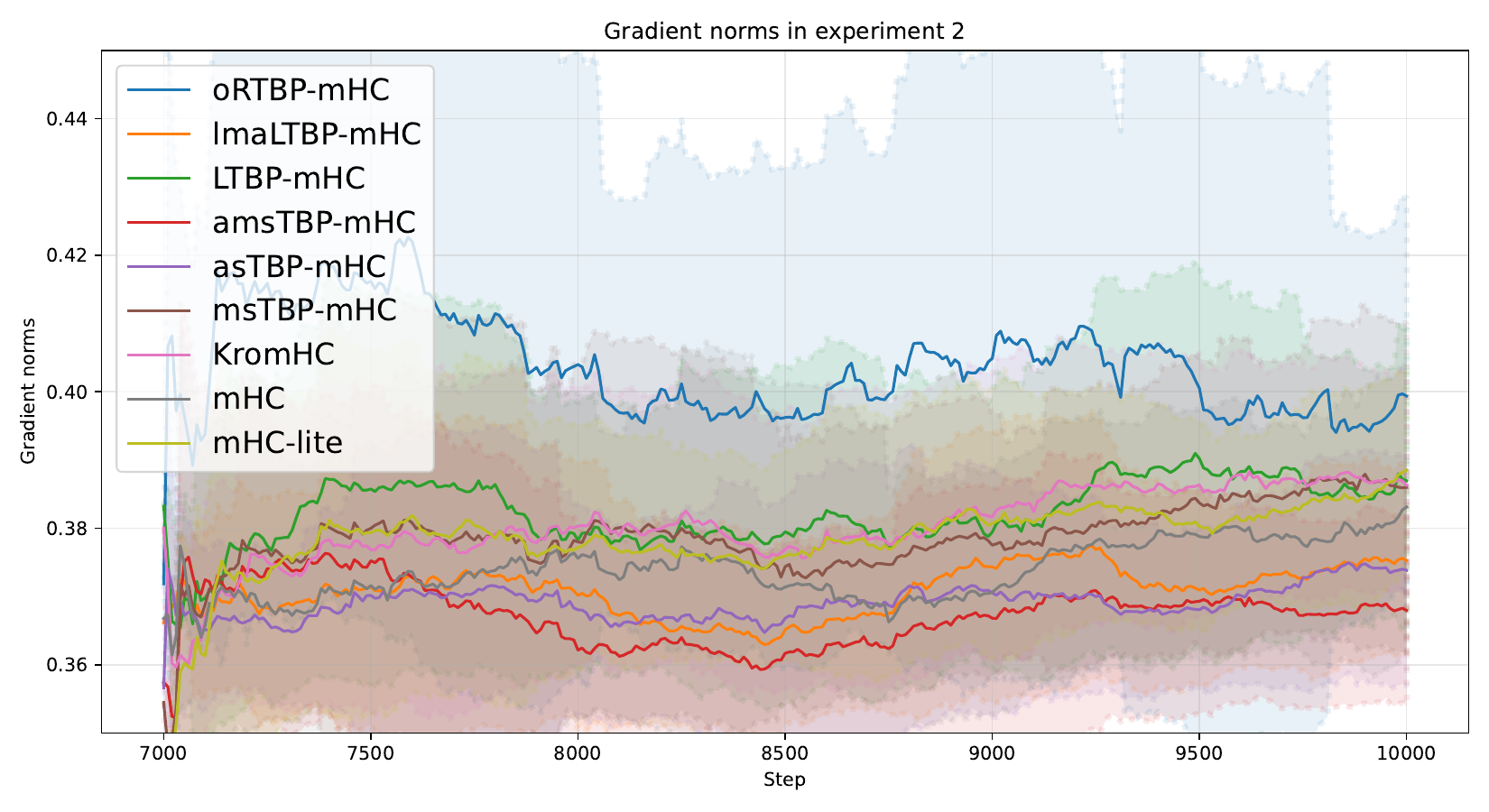}
    \caption{Experiment 2}
    \label{fig:grad-norm-2}
\end{subfigure}

\vspace{0.4em}

\begin{subfigure}{0.48\linewidth}
    \centering
    \includegraphics[width=\linewidth]{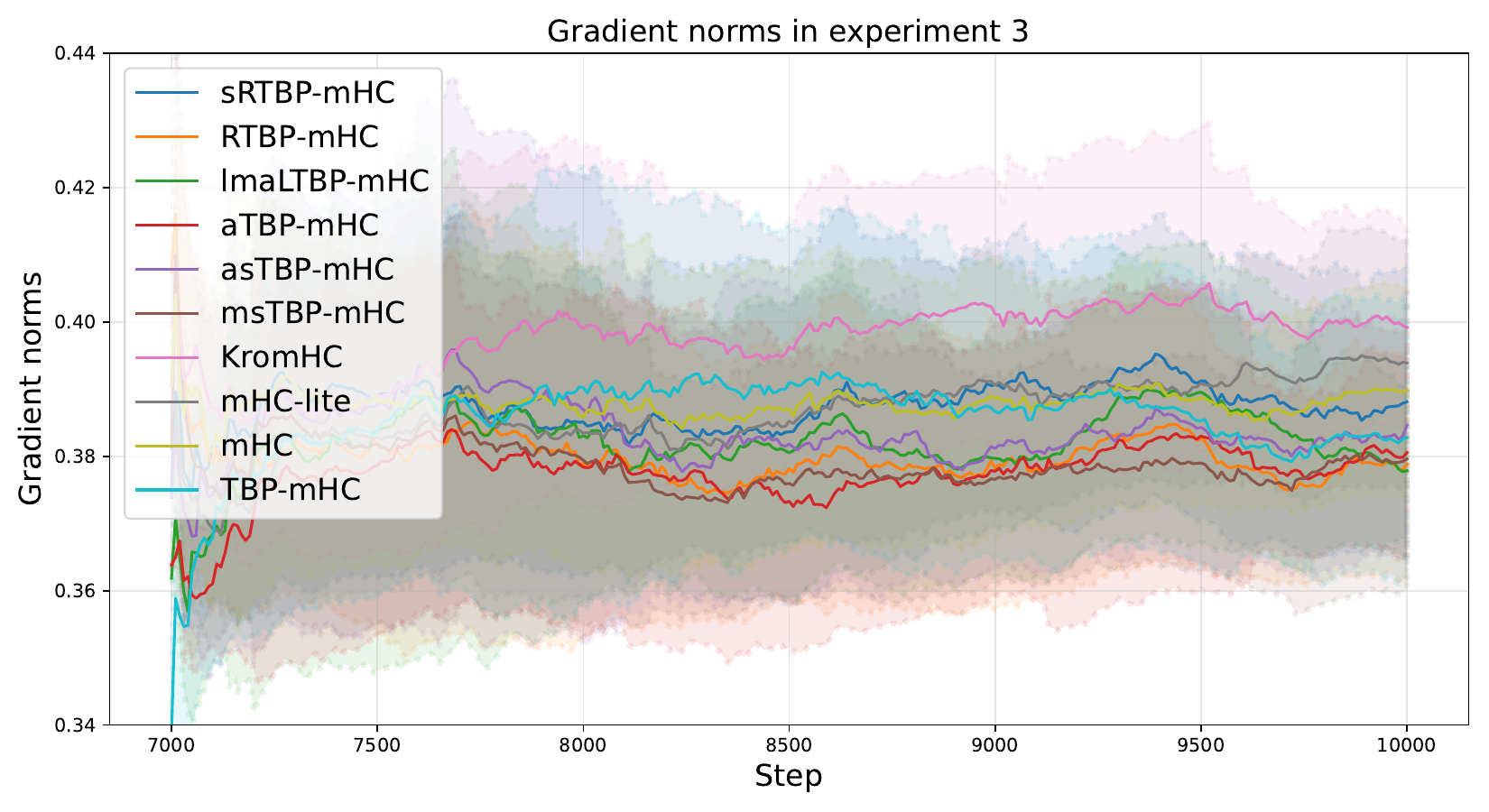}
    \caption{Experiment 3}
    \label{fig:grad-norm-3}
\end{subfigure}
\hfill
\begin{subfigure}{0.48\linewidth}
    \centering
    \includegraphics[width=\linewidth]{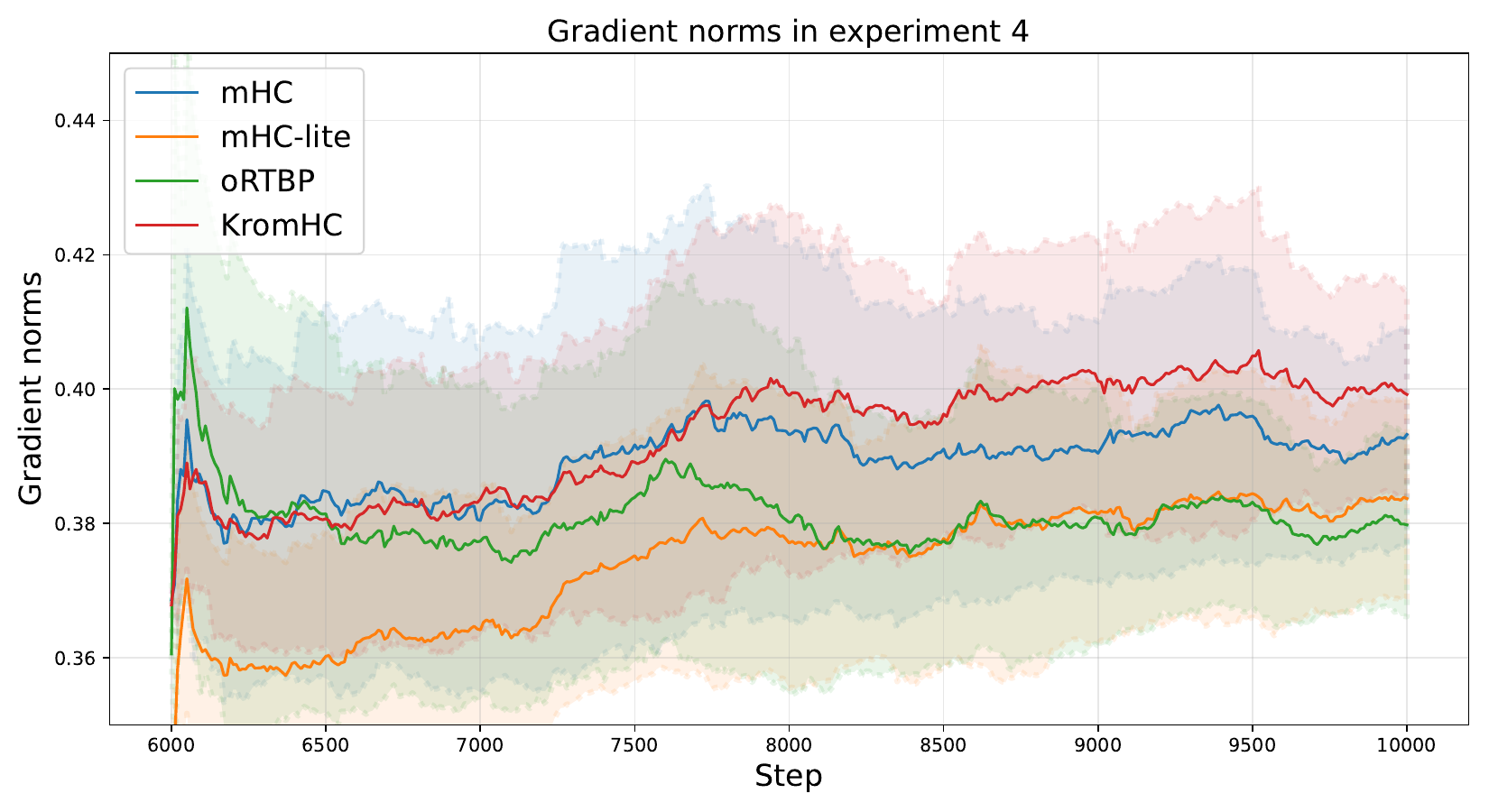}
    \caption{Experiment 4}
    \label{fig:grad-norm-4}
\end{subfigure}

\caption{Gradient norm dynamics across four experiments. Each panel reports the gradient norm over training iterations for a different configuration.}
\label{fig:gradient-norms}
\end{figure}

\section{Conclusion}

We introduced TBP and RTBP, transport-based parameterizations for manifold-constrained hyper-connections that address the limitations of prior approaches: Sinkhorn approximation error, Birkhoff--von Neumann factorial complexity, and Kronecker structural rigidity. These methods construct residual matrices exactly in the Birkhoff polytope using only $(n-1)^2$ degrees of freedom, preserving full expressivity without iterative projection or combinatorial explosion. RTBP improves symmetry and parallelism, and optimizer-aware variants such as oRTBP further stabilize training. Experiments on language-model pretraining show that exact doubly stochastic constraints can be achieved without sacrificing scalability or performance.

\subsection*{Limitations and future work.}

Despite very competitive results, current implementations of (R)TBP
do not achieve their full theoretical potential. This is due to the
fact that (R)TBP-based methods introduce nontrivial couplings and
nonlinear structure (especially recursive methods), which may complicate
optimization and implementation, particularly for very large n. 
Future work will explore implementations for improved training dynamics, in particular
under modern improved optimizers, such as AdamN \citep{electronics15030670}, Muon \citep{liu2025muonscalablellmtraining}, and Sophia \citep{liu2024sophiascalablestochasticsecondorder}.




\appendix

\section{Block decomposition of a transportation polytope}\label{sec:Decomposition}

Decomposing transportation polytopes goes back to Danzig \citep{dantzig1998linear}
for counting the number of solutions. The following construction is
elementary, but is very hard to find in the literature.

Fix integers $1\le k<n$ , $1\le l<m$ , and partition the indices:

\begin{equation}
I_{1}=\{1,\dots,k\},\quad I_{2}=\{k+1,\dots,n\},\qquad J_{1}=\{1,\dots,l\},\quad J_{2}=\{l+1,\dots,m\}.\label{eq:part}
\end{equation}
Let $r'$, $r''$, $c'$, and $c''$ be another set of positive margins
such that $r'+r''=r$, $c'+c''=c$, and partitioned vectors 

\[
r_{I_{1}}'=(r'_{i})_{i\in I_{1}},\quad r_{I_{1}}''=(r''_{i})_{i\in I_{1}},\quad r_{I_{2}}'=(r'_{i})_{i\in I_{2}},\quad r_{I_{2}}''=(r''_{i})_{i\in I_{2}},
\]

\[
c_{J_{1}}'=(c'_{j})_{j\in J_{1}},\quad c_{J_{2}}'=(c'_{j})_{j\in J_{2}},\quad c_{J_{1}}''=(c''_{j})_{j\in J_{1}},\quad c_{J_{2}}''=(c''_{j})_{j\in J_{2}}
\]
satisfy block balance conditions:

\[
\sum_{i\in I_{1}}r'_{i}=\sum_{j\in J_{1}}c'_{j},\qquad\sum_{i\in I_{1}}r''_{i}=\sum_{j\in J_{2}}c'_{j},
\]

\[
\sum_{i\in I_{2}}r'_{i}=\sum_{j\in J_{1}}c''_{j},\qquad\sum_{i\in I_{2}}r''_{i}=\sum_{j\in J_{2}}c''_{j}.
\]
Then a matrix 

\[
X=\begin{pmatrix}X_{11} & X_{12}\\
X_{21} & X_{22}
\end{pmatrix},
\]
such that

\[
X_{11}\in\mathcal{T}(r'_{I_{1}},c'_{J_{1}}),\quad X_{12}\in\mathcal{T}(r''_{I_{1}},c'_{J_{2}}),\quad X_{21}\in\mathcal{T}(r'_{I_{2}},c''_{J_{1}}),\quad X_{22}\in\mathcal{T}(r''_{I_{2}},c''_{J_{2}}),
\]
belongs to $X\in\mathcal{T}(r,c)$.

Conversely, every matrix $X\in\mathcal{T}(r,c)$ can be block-partitioned
into 

\[
X=\begin{pmatrix}X_{11} & X_{12}\\
X_{21} & X_{22}
\end{pmatrix},
\]
with

\[
X_{11}\in\mathcal{T}(r'_{I_{1}},c'_{J_{1}}),\quad X_{12}\in\mathcal{T}(r''_{I_{1}},c'_{J_{2}}),\quad X_{21}\in\mathcal{T}(r'_{I_{2}},c''_{J_{1}}),\quad X_{22}\in\mathcal{T}(r''_{I_{2}},c''_{J_{2}}).
\]
for the respective choice of $r'$, $r''$, $c'$, and $c''$.

Thus we can say that for a fixed partitioning of indices $I_{1},I_{2},J_{1},J_{2}$,
we have a partitioning of $\mathcal{T}(r,c)$, that we write symbolically
as

\begin{equation}
\mathcal{T}(r,c)=\bigcup_{r',r'',c',c''}\begin{pmatrix}\mathcal{T}(r'_{I_{1}},c'_{J_{1}}) & \mathcal{T}(r''_{I_{1}},c'_{J_{2}})\\
\mathcal{T}(r'_{I_{2}},c''_{J_{1}}) & \mathcal{T}(r''_{I_{2}},c''_{J_{2}})
\end{pmatrix}\label{eq:dectp}
\end{equation}

for all possible choices of $r'$, $r''$, $c'$, and $c''$, satisfying
above conditions.


\section{Parametrizations for mHC, mHC-lite, KromHC}\label{sec:Parametrizations}

In mHC-lite the structure of mHC remains unchanged, except for $H_{l}^{\mathrm{res}}$: 

\[
\begin{cases}
\hat{x}'_{l}=\mathrm{RMSNorm}(\hat{x}_{l}),\\[6pt]
H_{l}^{\mathrm{pre}}=\sigma\!\big(\alpha_{l}^{\mathrm{pre}}\,\hat{x}'_{l}W_{l}^{\mathrm{pre}}+b_{l}^{\mathrm{pre}}\big),\\[6pt]
H_{l}^{\mathrm{post}}=2\,\sigma\!\big(\alpha_{l}^{\mathrm{post}}\,\hat{x}'_{l}W_{l}^{\mathrm{post}}+b_{l}^{\mathrm{post}}\big),\\[6pt]
a_{l}=\mathrm{softmax}\!\big(\alpha_{l}^{\mathrm{res}}\,\hat{x}'_{l}W_{l}^{\mathrm{res}}+b_{l}^{\mathrm{res}}\big),\\[6pt]
H_{l}^{\mathrm{res}}={\displaystyle \sum_{k=1}^{n!}a_{l,k}\,P_{k},}
\end{cases}
\]
where all elements defined in (\ref{eq:mhc}) retain their meaning,
and $\{P_{k}\}_{k=1}^{n!}$ are permutation matrices. 

In KromHC, if $n=i_{1}\cdot i_{2}\cdot\dots\cdot i_{K}$
is factored into $K$ terms, 

\begin{equation}
\begin{cases}
H_{l}^{\mathrm{pre}}=\sigma\!\big(\alpha_{l}^{\mathrm{pre}}\,x'_{l}W_{l}^{\mathrm{pre}}+b_{l}^{\mathrm{pre}}\big),\\[6pt]
H_{l}^{\mathrm{post}}=2\,\sigma\!\big(\alpha_{l}^{\mathrm{post}}\,x'_{l}W_{l}^{\mathrm{post}}+b_{l}^{\mathrm{post}}\big),\\[6pt]
a_{l}^{(k)}=\mathrm{Softmax}\!\big(\alpha_{l}^{\mathrm{res}}\,x'_{l}W_{l}^{\mathrm{res},k}+b_{l}^{\mathrm{res},k}\big),\\[6pt]
U_{l}^{(k)}={\displaystyle \sum_{m=1}^{i_{k}!}a_{l}^{(k)}(m)\,P_{m},}\\[6pt]
H_{l}^{\mathrm{res}}=\bigotimes_{k=1}^{K}U_{l}^{(k)},
\end{cases}\label{eq:krom}
\end{equation}
where $P_{m}\in\mathbb{R}^{i_{k}\times i_{k}}$ is the $m$-th permutation
matrix, $a_{l}^{(k)}(m)$ is the $m$-th entry of $a_{l}^{(k)}$,
and $\bigotimes_{k=1}^{K}$ denotes the Kronecker product over factors
$U_{l}^{(k)}$. 

Each factor $U_{l}^{(k)}$ lies in the Birkhoff polytope $\mathcal{B}_{i_{k}}$
as a convex combination of permutation matrices, hence $H_{l}^{\mathrm{res}}$
is doubly stochastic by closure under Kronecker products. The factorization
$\{i_{k}\}_{k=1}^{K}$ can be any decomposition of $n$ with $i_{k}\ge2$;
choosing a prime factorization (e.g., $i_{k}=2$ for $n=2^{K}$) yields
the highest parameter efficiency.


\section{Proofs}\label{sec:Proofs}

\subsubsection*{Proof of the theorem \ref{thm:TBP-defines-a}.}

Every step of the TBP algorithm is reversible: for the given matrix
$X$, on the step $(i,j)$ on the element $x_{ij}$, let the $(r_{i})_{i=1}^{n}$
and $(c_{j})_{j=1}^{m}$ be the remaining budgets in rows and columns
respectively. Let also $L_{ij}$ and $U_{ij}$ be the TBP feasibility
boundaries. Then 
\[
t_{ij}=\sigma^{-1}\left(\frac{x_{ij}-L_{ij}}{\Delta_{ij}}\right),
\]
where $\sigma^{-1}$ is the inverse function to sigmoid. This defines
a map that is the inverse to the TBP chart on the step $(i,j)$, and
so the chart is invertible. Thus it is bijective.

\subsubsection*{Proof of the theorem \ref{thm:RTBP-algorithm-uses}.}

Let $D(n,m)$ denote the number of scalar parameters used by RTBP
on an $n\times m$ transportation polytope. Then 

\[
D(n,m)=(n-1)(m-1).
\]

\begin{lem*}
If an $n\times m$ transportation problem is split into a $2\times2$
block decomposition with row group sizes $(n_{1},n_{2})$ and column
group sizes $(m_{1},m_{2})$ , then this splitting requires $n+m-3$
parameters.
\end{lem*}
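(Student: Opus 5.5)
We count the free scalars needed to fix all four block margins $(r',r'',c',c'')$ once the outer margins $r,c$ are known. Equivalently, we compute the dimension of the set of admissible margin choices in the decomposition (\ref{eq:dectp}). The target is
\[
(n-1)(m-1)-\big[(n_1-1)(m_1-1)+(n_1-1)(m_2-1)+(n_2-1)(m_1-1)+(n_2-1)(m_2-1)\big]=n+m-3,
\]
which follows by direct expansion using $n_1+n_2=n$ and $m_1+m_2=m$. This identity is exactly what makes the lemma feed the induction for $D(n,m)$. The lemma itself should be proved by a direct construction, not by subtraction, so that the induction is not circular.

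The construction has three stages.

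\begin{enumerate}
\item One scalar fixes the block mass $M_{11}$ in the interval $[\max(0,R_1-C_2),\min(R_1,C_1)]$. As shown in the RTBP section, this determines $M_{12},M_{21},M_{22}$.
\item The row split is chosen. For $i\in I_1$ we must split each $r_i=r'_i+r''_i$ subject to $\sum_{i\in I_1}r'_i=M_{11}$. This is a $1$-row-sum-constrained split of $n_1$ entries, hence a $2\times n_1$ transportation problem with margins $(M_{11},M_{12})$ and $r_{I_1}$. It has $(2-1)(n_1-1)=n_1-1$ degrees of freedom. Likewise the split of rows in $I_2$ with margins $(M_{21},M_{22})$ contributes $n_2-1$.
\item The column split is handled the same way: columns in $J_1$ give $m_1-1$ and columns in $J_2$ give $m_2-1$.
\end{enumerate}

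Summing gives $1+(n_1-1)+(n_2-1)+(m_1-1)+(m_2-1)=n+m-3$. We would parameterize each of these small $2\times k$ problems by TBP itself, invoking Theorem \ref{thm:TBP-defines-a}. This gives exactly $k-1$ sigmoid parameters per split and a bijective chart onto the interior.

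The main point to check is that every choice in these sequential stages is feasible and strictly positive in the interior. This is needed so that the four block problems are nondegenerate transportation polytopes with positive margins, and so that the stages are independent apart from the shared $M_{11}$. We would argue this by noting the following.
\begin{itemize}
\item For $M_{11}$ in the open interval, all four block masses are positive.
\item Each $2\times k$ split problem then has positive margins summing consistently, since $M_{11}+M_{12}=R_1$ and so on. TBP therefore produces interior points.
\end{itemize}
We would also observe that the block balance conditions listed in Appendix \ref{sec:Decomposition} are then satisfied by construction: the row-$I_1$ and column-$J_1$ splits both sum to $M_{11}$, and similarly for the other three blocks. This shows that no further hidden constraint removes a degree of freedom.

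With the lemma in hand, the main theorem follows by strong induction on $n+m$. The base cases $n=1$ or $m=1$ give $D=0$. In the inductive step,
\[
D(n,m)=n+m-3+\sum_{a,b\in\{1,2\}}D(n_a,m_b),
\]
and the expansion displayed above shows this equals $(n-1)(m-1)$.
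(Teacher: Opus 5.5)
Your proposal is correct and follows the same count as the paper. You use one scalar for $M_{11}$, then $n_1-1$, $n_2-1$, $m_1-1$, $m_2-1$ free choices for the splits of $r_{I_1}, r_{I_2}, c_{J_1}, c_{J_2}$ given the block masses, for a total of $n+m-3$. Your additions do not change the route, but they make explicit what the paper's proof only asserts: you realize each split as a $2\times k$ transportation problem charted by TBP, and you check that $M_{11}$ in the open interval makes all four block masses, and hence all sub-margins, strictly positive.
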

\begin{proof}
The choice of $r'$ and $c'$ determines the choice of $r''$ and
$c''$ respectively. With fixed $M_{11}$, block balance conditions
imply that the choice of $r_{I_{1}}'$ and $r_{I_{2}}'$ involves
$n_{1}-1$ and $n_{2}-1$ parameters respectively. Similarly the choice
of $c_{J_{1}}'$ and $c_{J_{2}}'$ involves $m_{1}-1$ and $m_{2}-1$
parameters respectively. Coupled with the choice of $M_{11}$, this
imply that the choice of partition for $X$ involves 
\[
(n_{1}-1)+(n_{2}-1)+(m_{1}-1)+(m_{2}-1)+1=n+m-3
\]
free parameters.
\end{proof}
We will now prove the theorem by induction.

If $n=1$ or $m=1$ , the transportation polytope is a single point,
so 

\[
D(1,m)=D(n,1)=0.
\]
This agrees with

\[
(1-1)(m-1)=0,\qquad(n-1)(1-1)=0.
\]

In the general case, suppose we split rows into two groups of sizes
$n=n_{1}+n_{2}$ and columns into two groups of sizes $m=m_{1}+m_{2}$.
Then RTBP produces four subproblems: $(n_{1},m_{1})$, $(n_{1},m_{2})$,
$(n_{2},m_{1})$, $(n_{2},m_{2})$, with the theorem holding for them
by inductive assumption. According to the lemma, the splitting contributes
$n+m-3$ parameters. Then in total we have

\[
\begin{array}{c}
D(n,m)=(n+m-3)+D(n_{1},m_{1})+D(n_{1},m_{2})+D(n_{2},m_{1})+D(n_{2},m_{2})\\
=(n+m-3)+(n_{1}-1)(m_{1}-1)+(n_{1}-1)(m_{2}-1)+(n_{2}-1)(m_{1}-1)+(n_{2}-1)(m_{2}-1)\\
=n+m-3+(n_{1}-1)\big((m_{1}-1)+(m_{2}-1)\big)+(n_{2}-1)\big((m_{1}-1)+(m_{2}-1)\big)\\
=n+m-3+(n_{1}-1)(m-2)+(n_{2}-1)(m-2)=n+m-3+(n-2)(m-2)\\
=n+m-3+nm-2n-2m+4=nm-n-m+1=(n-1)(m-1).
\end{array}
\]

\section{Lazyfication}\label{sec:Lazyfication}

Given a mixing matrix $H\in\mathbb{R}^{\text{n\texttimes n}}$, following
\citep{fried2021alphalazyversionmarkovchains}, \citep{10.1561/0400000003},
\citep{hermon2016maximal}, we define its $\alpha$-lazy version
by 

\[
H^{(\alpha)}=(1-\alpha)I_{n}+\alpha H,\alpha\in\left(0,1\right].
\]
This is a convex combination of the identity and H, preserving nonnegativity
and double stochasticity.

Since $I_{n}$ commutes with $H$, the eigenvalues transform as 

\[
\lambda_{i}(H^{(\alpha)})=(1-\alpha)+\alpha\lambda_{i}(H).
\]

In particular, for doubly stochastic matrices we have $\lambda_{1}=1$
and $\mid\lambda_{i}\mid\leq1$ for $i\geq2$. Defining the spectral
gap as $\gamma(H)=1-{\rm max}_{i\geq2}\lambda_{i}(H)$, it follows
for symmetric matrices that

\[
\gamma(H^{(\alpha)})=\alpha\gamma(H).
\]
In general, for absolute spectral gap $\gamma_{*}(H)=1-{\rm max}_{i\geq2}\left|\lambda_{i}(H)\right|$,
\[
\gamma_{*}(H^{(\alpha)})=1-{\rm max}_{i\geq2}\left|(1-\alpha)+\alpha\lambda_{i}(H)\right|.
\]
As $\alpha\to0$, $\gamma_{*}(H^{(\alpha)})\to0$. Thus, choosing
relatively small $\alpha$ reduces the spectral gap.

\section{Minorization}\label{sec:Minorization}

Let $H\in\mathbb{R}^{\text{n\texttimes n}}$ be a doubly stochastic
matrix, and define its $\epsilon$-minorized version \citep{meyn2009markov},
\citep{langville2004deeper} by

\[
H_{\epsilon}=(1-\epsilon)H+\epsilon J,\quad J=\frac{1}{n}\boldsymbol{1}\boldsymbol{1}\top,\quad\epsilon\in(0,1).
\]
The matrix $J$ is rank-one, doubly stochastic, and has spectrum ${1,0,\ldots,0}$.
Since both $H$ and $J$ preserve the vector $\boldsymbol{1}$, the
Perron eigenvalue 1 is also an eigenvalue of $H_{\epsilon}$, and
for all nontrivial eigenvalues $\lambda_{i}(H_{\epsilon})$, ($i\geq2$)
we have $\lambda_{i}(H_{\epsilon})=(1-\epsilon)\lambda_{i}(H)$
(see, for example, \citep{langville2004deeper}). Thus the absolute
spectral gap 
\[
\gamma(H_{\epsilon})=1-(1-\epsilon){\rm max}_{i\geq2}\left|\lambda_{i}(H)\right|=\gamma(H)+\epsilon{\rm max}_{i\geq2}\left|\lambda_{i}(H)\right|,
\]
so minorization increases the spectral gap (unless it was already
at maximum).

The effect of minorization is to introduce uniform mixing across all
coordinates. Intuitively, this suppresses deviations from the invariant
direction 1, since any component orthogonal to 1 is damped by the
averaging action of J. It also enforces ergodicity through positivity
of all elements.

\section{Implementations.}\label{sec:Implementations.}

\subsection{TBP implementations}

While mathematically TBP is optimal, the nonlinear structure of it
makes it stubborn to training. We will benchmark the performance of
several modifications of the pure algorithm.

\subsubsection{Scaled TBP (sTBP)}

The presence of $\Delta_{ij}$ factor in $x_{ij}=L_{ij}+\Delta_{ij}\,\sigma(t_{ij})$
makes the gradient w.r.t. $t_{ij}$ be also scaled by $\Delta_{ij}$, 

\[
\frac{\partial x_{ij}}{\partial t_{ij}}=\Delta_{ij}\,\sigma(t_{ij})(1-\sigma(t_{ij})).
\]

If $\Delta_{ij}$ is small, it reduces the sensitivity w.r.t the changes
of $t_{ij}$.

We rectify it by scaling the input of the sigmoid function in all
TBP implementations by an appropriate factor, 

\[
x_{ij}=L_{ij}+\Delta_{ij}\,\sigma\left(\frac{\beta\cdot t_{ij}}{\Delta_{ij}+\epsilon}\right)
\]
with $\beta=4$ and $\epsilon$ being small. Then the gradient

\[
\frac{\partial x_{ij}}{\partial t_{ij}}=\frac{\beta\,\Delta_{ij}}{\Delta_{ij}+\varepsilon}\,\sigma\!\left(\frac{\beta t_{ij}}{\Delta_{ij}+\varepsilon}\right)\left(1-\sigma\!\left(\frac{\beta t_{ij}}{\Delta_{ij}+\varepsilon}\right)\right)
\]
is not getting vanished, since $\frac{\Delta_{ij}}{\Delta_{ij}+\varepsilon}\approx1$.

\subsubsection{Margined TBP (mTBP)}

If $x_{ij}$ is close to the upper bound $U_{ij}$ - it automatically
drives all remaining elements in that row to 0, effectively fixing
them. To prevent this saturation, we use parameter $\rho\in\left(0,\frac{1}{2}\right)$
to constraint $x_{ij}\in[L_{ij}+\rho\Delta_{ij},~U_{ij}-\rho\Delta_{ij}]$,
i.e. $\frac{x_{ij}-L_{ij}}{\Delta_{ij}}\in(\rho,1-\rho)$, meaning
that the formula for $x_{ij}$ changes to

\[
x_{ij}=L_{ij}+\Delta_{ij}\,\left(\rho+(1-2\rho)\sigma\left(\frac{\beta\cdot t_{ij}}{\Delta_{ij}+\epsilon}\right)\right).
\]
This prevents $x_{ij}$ from sticking to the boundary too early.

\subsubsection{Averaged TBP (aTBP)}

TBP is biased toward the north-west corner of the matrix, which can
consume most of the budgets because of the order of parametrization.

A fix for this is to average it on a set of other orders, given by
permutations $\{\pi_{1},\dots,\pi_{K}\}$ of the numbers $\{1,2,\dots,n\}$.
Namely, for $\pi_{k}\in S_{n}$, let $P_{\pi_{k}}$ be the matrix
of that permutation on the basis of $\mathbb{R}^{n}$, and $z_{l}^{(k)}=\hat{x}'_{l}W_{l,k}^{\mathrm{res}}+b_{l,k}^{\mathrm{res}}$.
Then 
\[
\begin{array}{c}
H_{l}^{(k)}=\mathrm{TBP}(z_{l}^{(k)}),\\
\alpha_{l,k}={\rm softmax}\left(w_{l,k}^{a}\right)\\
H_{l}^{res}=\sum_{k=1}^{K}\alpha_{l,k}\,P_{\pi_{k}}^{\top}H_{l}^{(k)}P_{\pi_{k}}.
\end{array}
\]

It solves the order bias at the cost of increasing $K$-times the
number of parameters involved into computing $H_{l}^{res}$.

\subsubsection{Lazyfication-Minorisation (lmTBP)}

Because the mixing matrices in our architecture act as stochastic
transport operators, their spectral properties directly influence
training dynamics. In particular, the spectral gap - the separation
between the leading eigenvalue 1 and the subdominant eigenvalues -
controls the rate at which information is mixed across residual streams.

A small spectral gap implies that the mixing operator admits nontrivial
directions that are nearly invariant, i.e., eigenvectors associated
with eigenvalues close to one. As a result, information aligned with
these directions decays slowly across layers, leading to weak mixing
and metastable behavior. In analogy with Markov chains, this corresponds
to the presence of nearly disconnected components or clusters in the
transport dynamics.

Conversely, an excessively large gap induces rapid convergence toward
a low-dimensional invariant subspace, resulting in loss of expressivity
(analogous to oversmoothing in graph neural networks).

To regulate this trade-off, we use \emph{lazyfication} \citep{fried2021alphalazyversionmarkovchains,10.1561/0400000003,hermon2016maximal}
and \emph{minorisation} \citep{meyn2009markov,langville2004deeper}
of the mixing matrix (see appendices \ref{sec:Lazyfication}, \ref{sec:Minorization}).
Specifically, given a doubly stochastic matrix $\tilde{H}$, we replace
it with

\[
\begin{array}{c}
\widetilde{H_{l}}={\displaystyle \mathrm{TBP}(\hat{x}'_{l}W_{l}^{\mathrm{res}}+b_{l}^{\mathrm{res}})}\\
H_{l}^{\mathrm{res}}=(1-\lambda_{l}-\mu_{l})\widetilde{H}_{l}+\lambda_{l}I+\mu_{l}J,
\end{array}
\]
where $I$ is the identity and $J=\frac{1}{n}\mathbf{1}\mathbf{1}^{\top}$
is the uniform mixing operator. The lazyfication term $\lambda I$
increases aperiodicity and shrinks the magnitude of subdominant eigenvalues,
thereby opening a spectral gap and improving stability. The minorisation
term $\mu J$ enforces ergodicity by introducing a uniform mixing
component, ensuring a strictly positive spectral gap and preventing
degeneracies. Together, these modifications yield a controlled contraction
rate, stabilizing training while preserving sufficient diversity across
residual streams.

\subsubsection{Linearization (LTBP)}

We also try dropping $\sigma$ completely, leading to the linear formula
for the elements of $H_{l}^{res}$, 
\[
x_{ij}=L_{ij}+\Delta_{ij}\cdot t_{ij},
\]
with $t_{ij}$ being clipped to stay in the interval $\left[0;1\right]$.
This leads to simpler gradient flow with less saturation problems. 

Averaging and lazyfication-minorisation can also be applied to the
linear variant, leading to aLTBP, lmLTBP, and lmaLTBP (for both sequentially).

In LTBP and its variants $H_{l}^{res}$ does not depend on $X$, instead
being a fixed matrix per every layer of the model.

\subsubsection{Post-minorization}

All implementations also include post-minorization (see appendix \ref{sec:Minorization}),
to ensure that the resulting matrix is ergodic, with initial minorization
constant $\delta_{l}=\sigma(-8)\approx3.35\times10^{-4}$.

\subsection{RTBP implementation}

On the level of algorithms all modifications of TBP (s, m, a, lm)
can be applied to RTBP, giving sRTBP, mRTBP, etc.

Current implementation of RTBP variants for odd dimentions $n=2k+1$ of the matrix on the first step chips off last raw/column and passes to even-dimentional polytope. Since we only tested n=4 case, it did not affect the results.

Modern GPUs still have very little support for recursion. Therefore,
while the recursive formulation of the transportation Birkhoff parameterization
provides a conceptually clear description of the construction, our
implementation had to departs from a literal recursion in order to
better use the advantages of the modern GPU architectures. We restructure
the recursion into a vectorized, batched computation, where multiple
subproblems arising from the recursive decomposition are evaluated
simultaneously. This eliminates recursion overhead, improves memory
locality, and maximizes parallel utilization of GPU cores.

\subsubsection{Optimized RTBP (oRTBP)}

RTBP has a highly nonlinear structure with strong coupling between
parameters, which makes it difficult to train with standard optimization
methods. In particular, AdamW \citep{loshchilov2018decoupled}
weight decay does not interact well with the transport parameterization:
applying weight decay to the residual transport logits mainly pushes
them toward zero, which corresponds to midpoint choices in the local
transport intervals, rather than meaningfully shrinking the resulting
doubly stochastic matrix. As a result, the optimization dynamics of
the residual mixing matrix $H^{res}$ can become misaligned with the
intended geometry.

To address this, we introduce an optimized version of RTBP (oRTBP)
that makes the optimizer aware of this structure. The key idea is
to treat the residual transport parameters differently from the rest
of the model. Specifically, the parameters that define $H^{res}$
- including the transport logits, residual scale, and optional minorization
parameter $\delta$ - are placed in separate optimizer groups with
no weight decay. This allows us to assign them their own learning
rates, Adam hyperparameters, and gradient clipping thresholds. In
practice, we use distinct learning-rate multipliers for the transport
chart, the residual scale, and the $\delta$ parameter, and apply
separate gradient clipping to these groups.

By contrast, the projection matrices $H^{pre}$ and $H^{post}$ are
standard sigmoid-based mappings without geometric constraints, so
they behave like ordinary neural network layers and are trained with
default optimizer settings.

Overall, oRTBP does not change the underlying RTBP parameterization,
but rather provides an optimizer-aware implementation of the same
exact scaled recursive transport chart.

\section{Experiments.}\label{sec:Experiments.}

All experiments are run with distributed data parallelism on a single
node with 8 GPUs using \emph{torchrun}, a context length of 1024 tokens,
and bfloat16 training. We use a per-GPU micro-batch size of 16 and gradient\_accumulation\_steps=8;
this corresponds to an effective batch size of 131,072 tokens per
optimizer step. Optimization is performed with AdamW using beta1=0.9, beta2=0.95,
and weight decay 0.1. We apply cosine learning-rate decay with a 200-step
linear warmup, and set the minimum learning rate to one tenth of the
peak learning rate. Global gradient clipping is set to 1.0. Models
are evaluated every 500 steps over 200 evaluation batches.

We consider two main model scales. The small configuration uses 6
Transformer layers, 8 attention heads, and embedding dimension 512,
with peak learning rate $1e-3$. The medium configuration uses 12
layers, 12 heads, and embedding dimension 768, with peak learning
rate $6e-4$. Both settings are trained for 10,000 optimization steps,
corresponding to approximately 1.31B training tokens under the default
OpenWebText setup. 

Following \citep{mhc-lite}, in each training session losses are
computed as a moving average over the last 200 iterations.

In \emph{margined} TBP (mTBP, msTBP, amsTBP, etc.) variants $\rho=1e-4$.
For all \emph{averaged} TBP variants we only use two permutations,
identity and reverse, which gives direct and reverse order.

For oRTBP-mHC, we further use dedicated optimizer groups for the residual-chart,
residual-scale, and delta parameters, with details given in experiment
descriptions.

\subsection{Initialization.}

We initialize the hyper-connection parameters so that, at initialization,
each block starts from a simple near-residual regime. In the manifold-constrained
variants, $W_{l}^{res,k}$ , $W_{l}^{pre}$ and $W_{l}^{post}$ are
initialized to zero. The pre- and post-mapping biases $b_{l}^{pre}$
and $b_{l}^{post}$ are set to $-1$ in all entries except for one
designated stream, which is set to $1$, and their corresponding scales
$\alpha_{l}^{pre}$ and $\alpha_{l}^{post}$ are initialized to $0.01$.

For mHC, the residual-mixing bias matrix is initialized to $-8$ in
all entries except on the diagonal, which is set to $0$, i.e. $b_{l}^{\mathrm{res}}=-8\,\mathbf{1}\mathbf{1}^{\top}+8I.$Thus,
before learning begins, the Sinkhorn-Knopp projection is applied to
a matrix strongly biased toward the identity. 

For mHC-lite, the residual logits over permutation matrices are initialized
as $b_{l}^{\mathrm{res}}(1)=0,$ $b_{l}^{\mathrm{res}}(k)=-8,$ $k\neq1,$
where $P_{1}=I$ denotes the identity permutation. Hence the initial
convex combination is concentrated on $P_{1}$. 

For KromHC, the same initialization is applied factorwise: $b_{l}^{\mathrm{res},k}(1)=0,$
$b_{l}^{\mathrm{res},k}(m)=-8,$$m\neq1,$ so that each factor $U_{l}^{k}$
is initialized close to $I$, and therefore $H_{l}^{\mathrm{res}}=\bigotimes_{k=1}^{K}U_{l}^{k}\approx I.$

The TBP-chart variants adopt a different initialization. In these
models, the residual chart logits are initialized to zero, $b_{l}^{\mathrm{res}}=0,$and
the dynamic residual projections are also initialized to zero. In
LTBP sigmoid function is not used in the chart, so the parameters
are initialized to $0.5$. Thus the initial residual matrix is given
by the default chart construction, starting at the midpoint of each
feasible interval, rather than by an identity-biased initialization. 

When post-minorization is used, its logit is initialized as $\delta_{l}^{\mathrm{logit}}=-8,$so
that $\delta_{l}=\sigma(-8)\approx3.35\times10^{-4},$and the corresponding
uniform-mixing component is negligible at initialization. 

In all averaged variants (aTBP, asTBP, etc.), the chart-weight logits
are also initialized to zero, $w_{l,1}=\cdots=w_{l,K}=0,$yielding
$\alpha_{l,1}=\cdots=\alpha_{l,K}=\frac{1}{K}$ after the softmax,
i.e. an initially uniform weighting across the predefined charts.

\subsection{Experiment 1.}

seed: 1337, 2337, 3337, 4337, 5337

oRTBP optimizer group parameters: 

LR multipliers: residual chart: 1/6; residual scale: 0.187; delta
(minorization): 0.05

Custom gradient clipping: chart/scale: 0.3; delta: 0.05

GPU type: NVIDIA GeForce RTX 3090 24G

\begin{table}[h]
\centering
\caption{Experiment 1 (small scale). Mean, min, and max across 5 seeds.}
\label{tab:app-exp1}
\resizebox{\linewidth}{!}{%
\begin{tabular}{lccccccccc}
\hline
& \multicolumn{6}{c}{\textbf{Train}} & \multicolumn{3}{c}{\textbf{Validation}} \\
\cline{2-7} \cline{8-10}
\textbf{Model} 
& \multicolumn{3}{c}{loss} 
& \multicolumn{3}{c}{bpb} 
& \multicolumn{3}{c}{loss} \\
\cline{2-4} \cline{5-7} \cline{8-10}
& mean & min & max & mean & min & max & mean & min & max \\
\hline
mHC       
& 3.45259 & 3.40749 & 3.65089 
& 1.13591 & 1.13228 & 1.14350 
& 3.48348 & 3.45846 & 3.49576 \\

mHC-lite  
& 3.45531 & 3.39372 & 3.68160 
& 1.13696 & 1.13549 & 1.13917 
& 3.48742 & 3.46715 & 3.51147 \\

KromHC    
& 3.45584 & 3.39341 & 3.67852 
& 1.13694 & 1.13466 & 1.13868 
& 3.48722 & 3.46254 & 3.50939 \\

oRTBP-mHC 
& 3.45481 & 3.39034 & 3.67315 
& 1.13660 & 1.13484 & 1.13791 
& 3.48589 & 3.45970 & 3.50648 \\
\hline
\end{tabular}}
\end{table}

\subsection{Experiment 2.}

seed: 1337

oRTBP optimizer group parameters: 

LR multipliers: residual chart: 1/6; residual scale: 0.05; delta (minorization):
0.05

Custom gradient clipping: chart/scale: 0.3; delta: 0.05

GPU type: NVIDIA GeForce RTX 5090 32G

\begin{table}[h]
\centering
\caption{Experiment 2 (small scale). Validation and training performance with ranks.}
\label{tab:app-exp2}
\begin{tabular}{lcccccccc}
\hline
& \multicolumn{4}{c}{\textbf{Train}} & \multicolumn{4}{c}{\textbf{Validation}} \\
\cline{2-5} \cline{6-9}
\textbf{Model} & loss & rank & bpb & rank & loss & rank & bpb & rank \\
\hline
mHC         & 3.45308 & 2 & 1.13510 & 3 & 3.49746 & 3 & 1.13848 & 3 \\
mHC-lite    & 3.46016 & 9 & 1.13552 & 4 & 3.50663 & 9 & 1.14146 & 9 \\
KromHC      & 3.44823 & 1 & 1.13387 & 1 & 3.49513 & 1 & 1.13772 & 1 \\
oRTBP-mHC   & 3.45320 & 3 & 1.13472 & 2 & 3.49743 & 2 & 1.13846 & 2 \\
LTBP-mHC    & 3.45352 & 4 & 1.13554 & 5 & 3.49952 & 4 & 1.13914 & 4 \\
lmaLTBP-mHC & 3.45491 & 5 & 1.13601 & 6 & 3.50218 & 5 & 1.14001 & 5 \\
amsTBP-mHC  & 3.45783 & 6 & 1.13645 & 7 & 3.50336 & 6 & 1.14040 & 6 \\
msTBP-mHC   & 3.45893 & 7 & 1.13713 & 8 & 3.50599 & 7 & 1.14112 & 7 \\
asTBP-mHC   & 3.45913 & 8 & 1.13751 & 9 & 3.50599 & 8 & 1.14125 & 8 \\
\hline
\end{tabular}
\end{table}

\subsection{Experiment 3.}

seed: 1337

GPU type NVIDIA RTX PRO 6000 Blackwell Server Edition 96G

\begin{table}[h]
\centering
\caption{Experiment 3 (medium scale). Validation and training performance with ranks.}
\label{tab:app-exp3}
\begin{tabular}{lcccccccc}
\hline
& \multicolumn{4}{c}{\textbf{Train}} & \multicolumn{4}{c}{\textbf{Validation}} \\
\cline{2-5} \cline{6-9}
\textbf{Model} & loss & rank & bpb & rank & loss & rank & bpb & rank \\
\hline
mHC        & 3.24764 & 3 & 1.06566 & 3 & 3.27135 & 3 & 1.06487 & 3 \\
mHC-lite   & 3.24239 & 1 & 1.06397 & 1 & 3.26555 & 1 & 1.06299 & 1 \\
KromHC     & 3.25192 & 6 & 1.06716 & 6 & 3.27594 & 6 & 1.06637 & 6 \\
RTBP-mHC   & 3.25406 & 8 & 1.06760 & 7 & 3.27767 & 7 & 1.06667 & 7 \\
lmaLTBP-mHC& 3.24977 & 4 & 1.06623 & 4 & 3.27336 & 4 & 1.06553 & 4 \\
sRTBP-mHC  & 3.25422 & 9 & 1.06783 & 8 & 3.27778 & 9 & 1.06697 & 9 \\
msTBP-mHC  & 3.24537 & 2 & 1.06508 & 2 & 3.26911 & 2 & 1.06414 & 2 \\
asTBP-mHC  & 3.25782 & 10 & 1.06909 & 10 & 3.28178 & 10 & 1.06827 & 10 \\
aTBP-mHC   & 3.25061 & 5 & 1.06696 & 5 & 3.27438 & 5 & 1.06586 & 5 \\
TBP-mHC    & 3.25367 & 7 & 1.06795 & 9 & 3.27767 & 8 & 1.06693 & 8 \\
\hline
\end{tabular}
\end{table}

\subsection{Experiment 4.}

seed: 1337

oRTBP optimizer group parameters: 

LR multipliers: residual chart: 0.42; residual scale: 0.62; delta
(minorization): 0.25

Custom gradient clipping: chart/scale: 0.6; delta: 0.2

GPU type: NVIDIA RTX PRO 6000 Blackwell Server Edition 96G

\begin{table}[h]
\centering
\caption{Experiment 4 (medium scale). Final performance.}
\label{tab:app-exp4}
\begin{tabular}{lcccc}
\hline
\textbf{Model} & \textbf{Train loss} & \textbf{Train bpb} & \textbf{Val loss} & \textbf{Val bpb} \\
\hline
mHC        & 3.24528 & 1.06476 & 3.26951 & 1.06427 \\
mHC-lite   & 3.23809 & 1.06263 & 3.26105 & 1.06152 \\
oRTBP-mHC  & 3.25166 & 1.06710 & 3.27600 & 1.06639 \\
KromHC     & 3.27092 & 1.07423 & 3.29645 & 1.07304 \\
\hline
\end{tabular}
\end{table}

\subsection{Speed}

Contrary to the expectation, in small and medium configurations the
speed of TBP and RTBP variants was competitive. In experiments 1 and
4 oRTBP was on par with mHC (less than 2\% difference). In experiment
3 msTBP has shown similar performance. As expected, the averaged variants
(aTBP, lmaLTBP) were the slowest.

\begin{table}[h]
\centering
\caption{Throughput comparison (tokens/sec) across models and experiments.}
\begin{tabular}{lccc}
\toprule
\textbf{Model} & \textbf{Exp. 1} & \textbf{Exp. 3} & \textbf{Exp. 4} \\
\midrule
KromHC   & 242959 & 357970 & 379204 \\
mHC-lite & 240720 & 358385 & 379918 \\
oRTBP    & 215843 & ---    & 329379 \\
msTBP    & ---    & 320125 & ---    \\
mHC      & 215374 & 317481 & 335103 \\
\bottomrule
\end{tabular}
\end{table}

\subsection{Gradient-unstable KromHC runs}\label{subsec:Gradient-unstable-KromHC-runs}

5746: NVIDIA RTX PRO 6000 Blackwell Server Edition 96G

7234: NVIDIA GeForce RTX 3090

\noindent\begin{minipage}[t]{1\columnwidth}%
\begin{minipage}[t]{0.5\columnwidth}%
\includegraphics[scale=0.27]{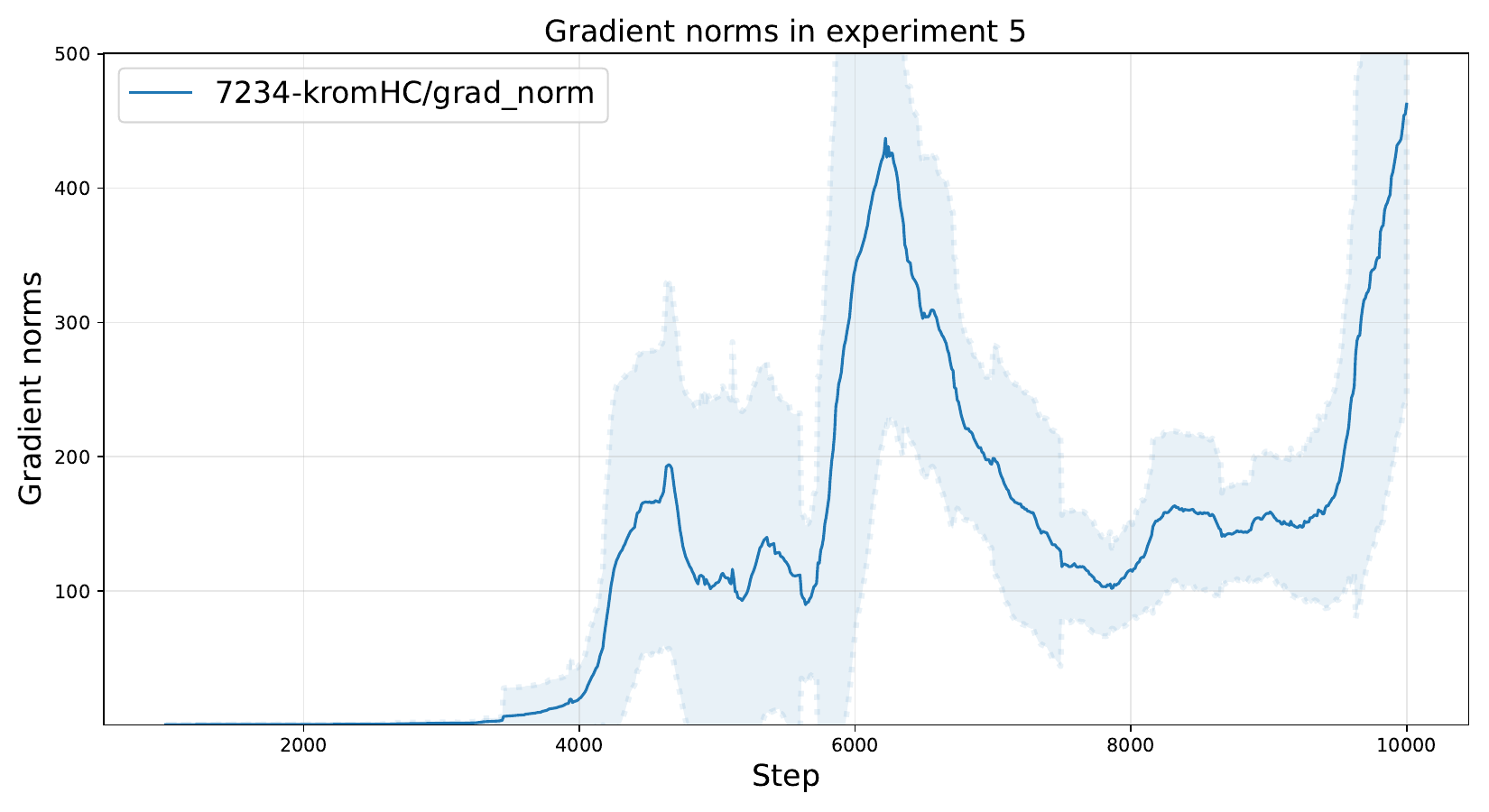}%
\end{minipage}%
\begin{minipage}[t]{0.5\columnwidth}%
\includegraphics[scale=0.27]{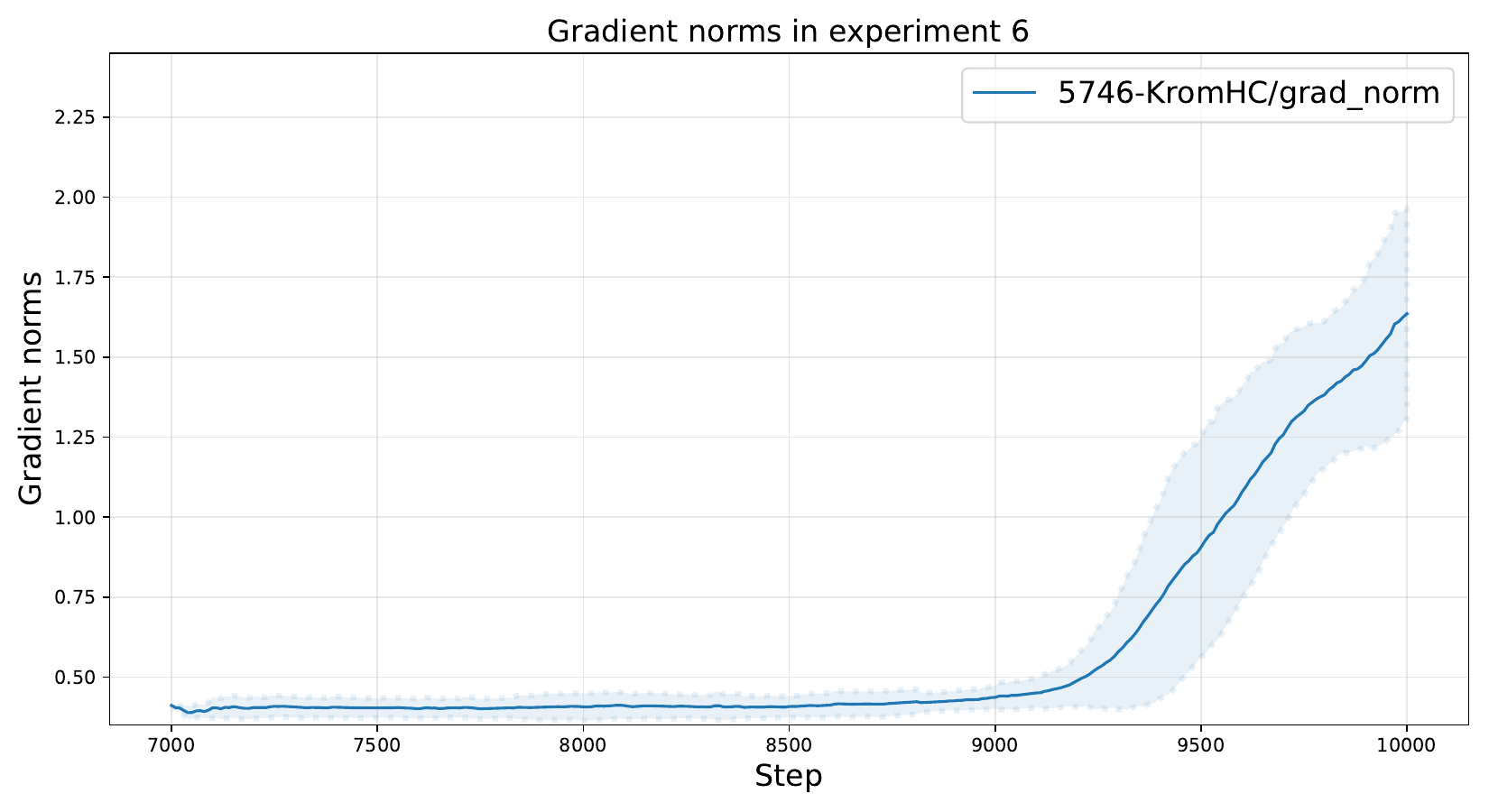}%
\end{minipage}%
\end{minipage}

\section{Algorithms}\label{sec:Algorithms}

\begin{algorithm}[H]
\caption{Transportation Birkhoff Polytope (TBP)}
\begin{algorithmic}[1]
\Require $n \ge 2$, free parameters $t_{ij}$ for $1 \le i,j \le n-1$
\State Initialize $r_i \gets 1$, $c_j \gets 1$ for all $1 \le i,j \le n$
\For{$i=1$ to $n-1$}
    \For{$j=1$ to $n-1$}
        \State $R^{\text{tail}}_j \gets \sum_{k=j+1}^{n} c_k$
        \State $C^{\text{tail}}_i \gets \sum_{k=i+1}^{n} r_k$
        \State $L_{ij} \gets \max\bigl(0,\; r_i - R^{\text{tail}}_j,\; c_j - C^{\text{tail}}_i \bigr)$
        \State $U_{ij} \gets \min(r_i,\; c_j)$
        \State $x_{ij} \gets L_{ij} + (U_{ij}-L_{ij})\,\sigma(t_{ij})$
        \State $r_i \gets r_i - x_{ij}$
        \State $c_j \gets c_j - x_{ij}$
    \EndFor
    \State $x_{i,n} \gets r_i$
    \State $c_n \gets c_n - x_{i,n}$
\EndFor
\For{$j=1$ to $n-1$}
    \State $x_{n,j} \gets c_j$
\EndFor
\State $x_{n,n} \gets r_n$
\State \Return $X=(x_{ij})$
\end{algorithmic}
\end{algorithm}

\medskip{}

\pagebreak{}

\begin{algorithm}[t]
\caption{Recursive Transportation Birkhoff Polytope (RTBP)}
\label{alg:rtbp}
\begin{algorithmic}[1]
\Require Row margins $r\in\mathbb{R}^n_{>0}$, column margins $c\in\mathbb{R}^m_{>0}$ with $\sum_i r_i=\sum_j c_j$, free parameters $t$
\Ensure Matrix $X\in\mathcal{T}(r,c)$

\Function{RTBP}{$r,c,t$}
    \State $n\gets |r|$, $m\gets |c|$

    \If{$n=1$}
        \State \Return $X=(c_1,\dots,c_m)$
    \EndIf

    \If{$m=1$}
        \State \Return $X=(r_1,\dots,r_n)^{\top}$
    \EndIf

    \If{$n$ is even}
        \State $k\gets n/2$
    \Else
        \State $k\gets (n+1)/2$
    \EndIf

    \If{$m$ is even}
        \State $\ell\gets m/2$
    \Else
        \State $\ell\gets (m+1)/2$
    \EndIf

    \State $I_1\gets\{1,\dots,k\}$, $I_2\gets\{k+1,\dots,n\}$
    \State $J_1\gets\{1,\dots,\ell\}$, $J_2\gets\{\ell+1,\dots,m\}$

    \State $R_1\gets \sum_{i\in I_1}r_i$, \quad $R_2\gets \sum_{i\in I_2}r_i$
    \State $C_1\gets \sum_{j\in J_1}c_j$, \quad $C_2\gets \sum_{j\in J_2}c_j$

    \State $L\gets \max(0,R_1-C_2,C_1-R_2)$
    \State $U\gets \min(R_1,C_1)$

    \State Choose next free parameter $t_0$ from $t$
    \State $M_{11}\gets L+(U-L)\sigma(t_0)$
    \State $M_{12}\gets R_1-M_{11}$
    \State $M_{21}\gets C_1-M_{11}$
    \State $M_{22}\gets R_2-M_{21}$

    \State Choose vectors $r'_{I_1},r''_{I_1},r'_{I_2},r''_{I_2}$ such that
    \[
        r'_{I_1}+r''_{I_1}=r_{I_1},
        \qquad
        r'_{I_2}+r''_{I_2}=r_{I_2},
    \]
    \[
        \sum_{i\in I_1}r'_i=M_{11},
        \quad
        \sum_{i\in I_1}r''_i=M_{12},
        \quad
        \sum_{i\in I_2}r'_i=M_{21},
        \quad
        \sum_{i\in I_2}r''_i=M_{22}.
    \]

    \State Choose vectors $c'_{J_1},c''_{J_1},c'_{J_2},c''_{J_2}$ such that
    \[
        c'_{J_1}+c''_{J_1}=c_{J_1},
        \qquad
        c'_{J_2}+c''_{J_2}=c_{J_2},
    \]
    \[
        \sum_{j\in J_1}c'_j=M_{11},
        \quad
        \sum_{j\in J_2}c'_j=M_{12},
        \quad
        \sum_{j\in J_1}c''_j=M_{21},
        \quad
        \sum_{j\in J_2}c''_j=M_{22}.
    \]

    \State Split remaining parameters $t$ into $t_{11},t_{12},t_{21},t_{22}$

    \State $X_{11}\gets \Call{RTBP}{r'_{I_1},c'_{J_1},t_{11}}$
    \State $X_{12}\gets \Call{RTBP}{r''_{I_1},c'_{J_2},t_{12}}$
    \State $X_{21}\gets \Call{RTBP}{r'_{I_2},c''_{J_1},t_{21}}$
    \State $X_{22}\gets \Call{RTBP}{r''_{I_2},c''_{J_2},t_{22}}$

    \State \Return
    \[
        X=
        \begin{pmatrix}
            X_{11} & X_{12}\\
            X_{21} & X_{22}
        \end{pmatrix}
    \]
\EndFunction
\end{algorithmic}
\end{algorithm}


\end{document}